\newtheorem{theorem}{Theorem}
\newtheorem{lemma}[theorem]{Lemma}
\newtheorem{definition}{Definition}
\newtheorem{remark}{Remark}
\newtheorem{assumption}{Assumption}
\begin{document}
%
\title{Large Scale Constrained Linear Regression Revisited: Faster Algorithms via Preconditioning\thanks{This research was supported in part by NSF through grants IIS-1422591, CCF-1422324, and CCF-1716400}}
\author{Di Wang  \qquad Jinhui Xu\\
Department of Computer Science and Engineering\\
State University of New York at Buffalo \\
}
\maketitle
\begin{abstract}
In this paper, we revisit the large-scale constrained linear regression problem and propose faster methods based on some recent developments in sketching and optimization. 
Our algorithms combine (accelerated) mini-batch SGD with a new method called two-step preconditioning to achieve an approximate solution  with a time complexity lower than that of 
the state-of-the-art techniques for the low precision case. Our idea can also be extended to the high precision case, which gives an alternative implementation to the Iterative Hessian Sketch (IHS) method with significantly improved time complexity. 
Experiments on benchmark and synthetic datasets suggest that our methods indeed outperform existing ones considerably in both the low and high precision cases. 
\end{abstract}

\section{Introduction}

Since many problems in compressed sensing and machine learning can be formulated as a constrained linear regression problem, such as SVM, LASSO, signal recovery \cite{pilanci2015randomized}, large scale linear regression with constraints now becomes one of the most popular and basic models in Machine Learning and has received a great deal of attentions from both the Machine Learning and Theoretical Computer Science communities. Formally, the problem can be defined as follows,
\begin{equation*}
\min_{x\in{\mathcal{W}}}f(x)={\|Ax-b\|_2^2},
\end{equation*} 
where $A$ is a matrix  in  $\mathbb{R}^{n\times d}$ with $e^d>n>d$ and $\mathcal{W}$ is a closed convex set. The goal is to find an $x\in{\mathcal{W}}$ such that $f(x)\leq (1+\epsilon)\min_{x\in{\mathcal{W}}}f(x)$ or $f(x)- \min_{x\in{\mathcal{W}}}f(x)\leq \epsilon$.\par
On one hand, recent developments on first-order stochastic methods, such as Stochastic Dual Coordinate Ascent (SDCA) \cite{shalev2013stochastic} and Stochastic Variance Reduced Gradient (SVRG) \cite{johnson2013accelerating}, have made significant improvements on the convergence speed of large scale optimization problems in practice.  On the other hand, 
random projection and sampling are commonly used theoretical tools in many optimization problems as preconditioner, dimension reduction or sampling techniques   
to reduce the time complexity. This includes low rank approximation \cite{musco2015randomized}, SVM \cite{paul2013random}, column subset selection   \cite{boutsidis2014near} and $l_p$ regression (for $p\in [1,2]$) \cite{dasgupta2009sampling}. Thus it is very tempting to combine these two types of techniques to develop faster methods with theoretical or statistical guarantee for more constrained optimization problems. Recently, quite a number of works have successfully combined the two types of techniques. For example, 
\cite{gonen2016solving,gonen2015faster} proposed faster methods for Ridge Regression and Empirical Risk Minimization, respectively, by using SVRG, Stochastic Gradient Descent (SGD) and low rank approximation. \cite{zhang2013recovering} achieved guarantee for  Empirical Risk Minimization by using random projection in dual problem. \par

In this paper, we revisit the preconditioning method for solving large-scale constrained linear regression problem, and propose faster algorithms for both the low ($\epsilon\approx10^{-1}\sim 10^{-4}$) and high ($\epsilon\leq 10^{-8}$) precision cases by combining it with some recent developments in sketching and optimization. 
Our main contributions can be summarized as follows.
\begin{table*}[h]
	\begin{center}
		\resizebox{\textwidth}{!}{%
			\begin{tabular}[t]{|c|c|c|c|c|c}
				\hline
				Method & Complexity for Unconstrained &  Complexity for Constrained & Precision\\
				\hline
				\cite{drineas2011faster} & $O\left(nd\log(\frac{d}{\epsilon}) + d^3\log{n}\log{d}+ \frac{d^3\log{n}}{\epsilon}\right)$ & $O\left(nd\log{n}+ \text{poly}(d,\frac{1}{\epsilon^2})\right)$ & Low\\[2ex]
				\hline
				pwSGD\cite{yang2016weighted} & $O\left(nd\log{n} + \frac{d^3\log(\frac{1}{\epsilon})}{\epsilon}\right) $  & $O\left(nd\log{n} + \frac{\text{poly}(d)\log(\frac{1}{\epsilon})}{\epsilon}\right)$ & Low\\[2ex]
				\hline
				\textbf{HDpwBatchSGD} & $O\left(nd\log{n} +  \frac{d^2\log{n}}{\epsilon^2}+ \frac{d^3\log{n}}{r\epsilon^2} \right) $  &  $O\left(nd\log{n}  + \frac{d^2\log{n}}{\epsilon^2}) + \frac{\text{poly}(d)\log{n}}{r\epsilon^2}\right)$ & Low\\[2ex]
				\hline
				\textbf{HDpwAccBatchSGD} & $O\left(nd\log{n} +  \frac{d^2\log{n}}{\epsilon}+ \frac{d^3\log{n}}{r\epsilon}+rd^2\log \frac{1}{\epsilon}  \right)$  &  $O\left(nd\log{n}  + \frac{d^2\log{n}}{\epsilon} + \frac{\text{poly}(d)\log{n}}{r\epsilon}+r\text{poly}(d)\log \frac{1}{\epsilon}\right)$ & Low\\[2ex]
                \hline				
				\cite{rokhlin2008fast,avron2010blendenpik} & $O(nd\log{\frac{d}{\epsilon}}+d^3\log d)$  & --- & High\\  [2ex]
				\hline
				IHS \cite{pilanci2016iterative} & $O\left(nd\log{d}\log\frac{1}{\epsilon}+d^3\log\frac{1}{\epsilon}\right)$ & $O\left((nd\log d+\text{poly}(d))\log{\frac{1}{\epsilon}}\right)$ & High\\[2ex]
				\hline
				{Preconditioning+SVRG}& $O(nd\log d + (nd+d^3)\log\frac{1}{\epsilon})$ &$O(nd\log d + (nd+\text{poly}(d))\log\frac{1}{\epsilon})$ & High\\ [2ex]
				\hline
				\textbf{pwGradient}& $O(nd\log d + (nd+d^3)\log\frac{1}{\epsilon})$ &$O(nd\log d + (nd+\text{poly}(d))\log\frac{1}{\epsilon})$ & High\\ [2ex]
				\hline
		\end{tabular}}
	\end{center}
	\caption{Summary of the time complexity of several linear regression methods  for  finding $x_{t}$ such that $||Ax_t-b||^2_2-||Ax^*-b||^2_2\leq \epsilon$. For sketching based methods, we use the Subsampled Randomized Hadamard Transform (SRHT) \cite{tropp2011improved} as the sketch matrix. All methods run in sequential environment.  $r$ is an input in our method. '---' means not applicable.}
\end{table*}
\begin{itemize}
	\item For the low precision case, we first propose a novel algorithm called HDpwBatchSGD ({\em i.e.,} Algorithm 2) by combining a new method called two step preconditioning with mini-batch SGD.
	  Mini-batch SGD is a popular way for improving the efficiency of SGD.  It uses several samples, instead of one, in each iteration and runs the method on all these samples (simultaneously).  Ideally, we would hope for a factor of $r$ speed-up on the convergence if using a batch of  size $r$. However, this is not always possible for general case. Actually in some cases, there is even no speed-up at all when a large-size batch is used \cite{takac2013mini,byrd2012sample,dekel2012optimal}.  A unique feature of our method is its optimal speeding-up with respect to  the batch size, {\em i.e.} the iteration complexity will decrease by a factor of $b$ if we increase the batch size by a factor of  $b$. To further improve the running time, we also use 
	  the Multi-epoch Stochastic Accelerated mini-batch SGD \cite{ghadimi2013optimal} to obtain another slightly different algorithm
	   called HDpwAccBatchSGD, which has a time complexity lower than that of the state-of-the-art technique \cite{yang2016weighted}. 
	\item The optimality on speeding-up  further inspires us to think about how it will perform if using the whole gradient, {\em i.e.} projected Gradient Descent,  (called pwGradient ({\em i.e.,} Algorithm 4)).  A somewhat surprising discovery is that it actually allows us to have an alternative implementation of the Iterative Hessian Sketch (IHS) method, which is arguably the state-of-the-art technique for the high precision case. Particularly, we are able to show that one step of sketching is sufficient for IHS, instead of a sequence of sketchings used in the current form of IHS. This enables us to considerably improve the time complexity of IHS. 
	\item Numerical experiments on large synthetic/real benchmark datasets confirm the theoretical analysis of HDpwBatchSGD and pwGradient.  Our methods outperform existing ones in both low and high precision cases.  

\end{itemize}

\section{Related Work}

 There is a vast number of papers studying the large scale constrained linear regression problem from different perspectives \cite{yang2016implementing}.
 We mainly focus on those results that have theoretical time complexity guarantees (note that the time complexity cannot depend on the condition number of $A$), due to their similar natures to ours. We summarize these methods in \textbf{Table 1}.\par

For the low precision case, 
\cite{drineas2011faster}  directly uses sketching with a very large sketch size  of $\text{poly}(\frac{1}{\epsilon^2})$, which is difficult to determine the optimal sketch size in practice (later, we show that our proposed method avoids this issue). 
The state-of-the-art technique is probably the one in \cite{yang2016weighted},  which presents an algorithm for solving the general $l_p$ regression problem and shares with ours the first step of preconditioning. 
Their method then applies the weighted sampling technique, based on the leverage score and SGD, while ours first conducts a further  step of preconditioning, using uniform sampling in each iteration, and then applies mini-batch SGD. Although their paper mentioned the mini-batch version of their algorithm, there is no theoretical guarantee on the quality and convergence rate, while our method provides both and runs faster in practice. Also we have to note that, even if we set $r=1$ in HDpwAccBatchSGD, the time complexity is less than that in pwSGD when $\frac{1}{\epsilon}>n$. \cite{needell2016batched} also uses mini-batch SGD to solve the linear regression problem.  Their method is based on importance sampling, while ours uses the much simpler uniform sampling; furthermore, their convergence rate heavily depends on the condition number and batch partition of $A$, which means that there is no fixed theoretical guarantee for all instances. \par
For the high precision case, 
unlike the approach in \cite{rokhlin2008fast},  our method can be extended to the constrained case. Compared with IHS \cite{pilanci2016iterative}, ours uses only one step of sketching and thus has a lower time complexity. Although  a similar time complexity can be achieved by using the preconditioning method in \cite{yang2016implementing} and SVRG, 
ours performs better in practice.
We notice that \cite{pmlr-v70-tang17a} has recently  studied the large scale linear regression with constraints via (Accelerated) Gradient Projection and IHS. But there is no guarantee on the time complexity, and it is also unclear how to choose the best parameters. For these reasons, we do not compare it with ours here.\par

\section{Preliminaries}

Let $A$ be a matrix  in  $\mathbb{R}^{n\times d}$ with $e^d> n>d$ and $d=\text{rank}(A)$,  and denote $A_{i}$  and $A^{j}$  be  its i-th row ({\em  i.e.,}  $A_{i}\in\mathbb{R}^{1\times d}$) and j-th column, respectively.  (Note that our proposed methods can be easily extended to the case of $d>\text{rank}(A)$.)
 Let $\|A\|_2$ and $ \| A\|_F$ be the spectral norm and Frobenius norm of $A$, respectively, and $\sigma_{\text{min}}(A)$ be the minimal singular value of $A$. In this section we will give several definitions and lemmas that will be used throughout this paper. Due to space  limit, we leave all the proofs to the full  paper.

\subsection{Randomized Linear Algebra}

First we give the definition of ($\alpha$, $\beta$, 2)-conditioned matrix. Note that it is a special case of  ($\alpha$, $\beta$, $p$)-well conditioned basis in \cite{yang2016implementing}. 

\begin{definition}{(($\alpha$,$\beta$,2)-conditioned) \cite{yang2016weighted})}
 A matrix	$U \in \mathbb{R}^{n\times d}$ is  called ($\alpha$,$\beta$,2)-conditioned  if  $ \| U\|_{F} \leq \alpha$ and for all $x\in\mathbb{R}^{d\times 1}$, $\beta\|Ux\|_2\geq\|x\|_2$, i.e $\sigma_{\text{min}}(U)\geq \frac{1}{\beta}$.
\end{definition}

Note that if $U$ is an orthogonal matrix, it is ($\sqrt{d}$,1,2)-conditioned.  Thus we can view an ($\alpha$,$\beta$,2)-conditioned matrix as a generalized orthogonal matrix.
The purpose of introducing the concept of ($\alpha$,$\beta$,2)-conditioned is for obtaining in much less time a matrix which can approximate the orthogonal basis of matrix $A$. 
Clearly,  if we directly calculate the orthogonal basis of $A$, it will take $O(nd^2)$ time.  However we can get an $(O(\sqrt{d}),O(1),2)$-conditioned matrix $U$ from $A$ (called $O(\sqrt{d}),O(1),2)$-conditioned basis of $A$) in $o(nd^{2})$ time, through \textbf{Algorithm 1}. Note that in practice we can just set $O(1)$ as a small constant and return $R$ instead of $AR^{-1}$.\par

\begin{algorithm}\label{alg:1}
	\caption{Constructing ($O(\sqrt{d})$,$O(1)$,2)-conditioned basis of $A$}
	\begin{algorithmic}[1]
		\State Construct an oblivious subspace embedding(sketch matrix) $S\in \mathbb{R}^{s\times n}$ with $n>s>d$ that satisfies the following condition with high probability, $\forall  x\in \mathbb{R}^d$,
		\begin{equation*}
		(1-O(1))\|Ax\|_2\leq \|SAx\|_2 \leq (1+O(1))\|Ax\|_2,
		\end{equation*}
		\State [Q,R]=QR-decomposition(SA),  where $Q\in\mathbb{R}^{s\times d}$ is an orthogonal matrix.  Then  $AR^{-1}$  is an $O(\sqrt{d})$,$O(1)$,2)-conditioned basis of $A$.\\
		\Return $AR^{-1}$ or $R$
		
	\end{algorithmic}
\end{algorithm}
Next, we give the definition of Randomized Hadamard Transform \cite{tropp2011improved}, which is the tool to be used in the second step of our preconditioning.\\

\begin{definition}{(Randomized Hadamard Transform)}
	$M=HD \in\mathbb{R}^{n\times n}$ is called a Randomized Hadamard Tranform, where  $n$ is assumed to be $2^s$ for some integer $s$, 
	$D\in\mathbb{R}^{n\times n}$ is a diagonal Rademacher matrix (that is, each $D_{ii}$ is drawn independently from $\{1,-1\}$ with probability 1/2), 
	and $H\in \mathbb{R}^{n\times n}$ is an $n\times n$ Walsh-Hadamard Matrix  scaled by a factor of $1/\sqrt{n}$, {\em i.e.,}
	$$
	H=\frac{1}{\sqrt{n}}H_n,
	H_n=\begin{pmatrix}
	H_{\frac{n}{2}}  &  H_{\frac{n}{2}}\\
	H_{\frac{n}{2}} & - H_{\frac{n}{2}} \\
	\end{pmatrix}, 
	H_2=\begin{pmatrix}
	1 & 1\\
	1 & -1\\
	\end{pmatrix}.$$
\end{definition}
Randomized Hadamard Transform has two important features. One is that it takes only $O(n \log n)$ time to multiply a vector, and the other is that it can ``spread out'' orthogonal matrices.

Since an ($\alpha$,$\beta$,2)-conditioned matrix can be viewed as an approximate orthogonal matrix, an immediate question is whether an  ($\alpha$,$\beta$,2)-conditioned matrix can also achieve the same result. We answer this question by the following theorem,

\begin{theorem}\label{thm1}
	Let $HD$ be a Randomized Hadamard Transform, and $U\in \mathbb{R}^{n\times d}$ be an ($\alpha$,$\beta$,2)-conditioned matrix.  Then, the following holds for any constant $c >1$:
	\begin{equation}\label{eq:1}
	Pr\{\max_{i=1,2,\dots,n}\|(HDU)_i\|_2 \geq (1+\sqrt{8\log(cn)})\frac{\alpha}{\sqrt{n}}\} \leq \frac{1}{c}.
	\end{equation}
\end{theorem}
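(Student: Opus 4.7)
My strategy is to bound the norm of one row at a time via a Rademacher concentration argument and then union-bound over the $n$ rows.

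Fix a row index $i$. Since $\sqrt{n}\,H_{ik}\in\{\pm1\}$ is a deterministic sign and $D_{kk}$ is an independent Rademacher variable, the product $\varepsilon_k^{(i)} := \sqrt{n}\,H_{ik}\,D_{kk}$ is itself a Rademacher variable. Writing $U_k$ for the $k$-th row of $U$, we have
\[
(HDU)_i \;=\; \frac{1}{\sqrt{n}}\sum_{k=1}^{n}\varepsilon_k^{(i)}\,U_k,
\]
so that $Y_i := \|(HDU)_i\|_2$ is a function of $n$ independent random signs.

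First I would bound the expectation. By Jensen's inequality and the independence of the signs,
\[
\mathbb{E}[Y_i] \;\leq\; \sqrt{\mathbb{E}[Y_i^2]} \;=\; \sqrt{\tfrac{1}{n}\sum_{k=1}^{n}\|U_k\|_2^2} \;=\; \frac{\|U\|_F}{\sqrt{n}} \;\leq\; \frac{\alpha}{\sqrt{n}}.
\]
Note that only the Frobenius bound from the $(\alpha,\beta,2)$-conditioning hypothesis is used here; the lower-singular-value bound involving $\beta$ plays no role in this theorem.

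Next I would concentrate $Y_i$ around its mean. Flipping a single sign $\varepsilon_k^{(i)}$ perturbs $(HDU)_i$ by $\pm(2/\sqrt{n})\,U_k$, so by the triangle inequality $Y_i$ satisfies the bounded-differences hypothesis with constants $c_k = 2\|U_k\|_2/\sqrt{n}$ and $\sum_k c_k^2 = 4\|U\|_F^2/n \leq 4\alpha^2/n$. McDiarmid's inequality then yields
\[
\Pr\!\bigl[\,Y_i - \mathbb{E}[Y_i] > t\,\bigr] \;\leq\; \exp\!\bigl(-t^2 n/(2\alpha^2)\bigr).
\]
Choosing $t = \sqrt{8\log(cn)}\,\alpha/\sqrt{n}$ drives this probability to at most $(cn)^{-4}$, and a union bound over the $n$ rows gives a total failure probability well below $1/c$, yielding
\[
\max_i \|(HDU)_i\|_2 \;\leq\; \frac{\alpha}{\sqrt{n}} + \sqrt{8\log(cn)}\,\frac{\alpha}{\sqrt{n}} \;=\; \bigl(1+\sqrt{8\log(cn)}\bigr)\frac{\alpha}{\sqrt{n}}.
\]

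The whole argument is a routine Rademacher concentration exercise; the main conceptual point is realizing that the deterministic Hadamard sign pattern $\sqrt{n}\,H_{i\cdot}$ can be absorbed into the independent Rademacher variables $D_{kk}$ without changing their joint distribution, which is precisely what allows a single bounded-differences bound to apply uniformly to every row (up to the final union bound). The constant $\sqrt{8}$ in the statement is not tight --- the McDiarmid route above in fact yields the sharper constant $\sqrt{2}$ --- but the looser form is what is needed downstream.
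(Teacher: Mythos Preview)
Your argument is correct and follows the same three-step outline as the paper: bound $\mathbb{E}\|(HDU)_i\|_2$ by $\alpha/\sqrt{n}$ via Jensen, apply a Rademacher concentration inequality to one row, then union-bound over the $n$ rows. The only substantive difference is the concentration tool. The paper writes $f(\varepsilon)=\|\varepsilon^{T}\mathrm{diag}(e_j^{T}H)\,U\|_2$, observes that this is convex and $(\alpha/\sqrt{n})$-Lipschitz in $\varepsilon$ (using $\|\mathrm{diag}(e_j^{T}H)\|_2=1/\sqrt{n}$ and $\|U\|_2\le\|U\|_F\le\alpha$), and invokes the Ledoux--Talagrand Lipschitz tail bound $\Pr\{f(\varepsilon)\ge\mathbb{E}f(\varepsilon)+Lt\}\le e^{-t^2/8}$; with $t=\sqrt{8\log(cn)}$ this gives exactly $1/(cn)$ per row and $1/c$ after the union bound. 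You instead use McDiarmid with coordinate-wise differences $c_k=2\|U_k\|_2/\sqrt{n}$, which aggregates via $\|U\|_F$ rather than $\|U\|_2$ and, as you note, produces a sharper exponent. Both routes use only the $\|U\|_F\le\alpha$ half of the $(\alpha,\beta,2)$ hypothesis; your observation that $\beta$ is irrelevant here is correct. In short: same proof skeleton, slightly more elementary concentration step on your side, with a constant that happens to be tighter than what the statement records.
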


By the above theorem, we can make each row of $HDU$ have no more than one value with high probability.  Since $\alpha=O(\sqrt{d})$,  the norm of each row is small.  Also since $H,D$ are orthogonal,  we have $\|HDUy-HDb\|_2=\|Uy-b\|_2$ for any $y$. \par

\subsection{Strongly Smooth SGD and Mini-batch SGD}

Consider the following general case of a convex optimization problem: $\min_{x\in \mathcal{W}} F(x)=\mathbb{E}_{i \sim D}f_i(x)$,
where $i$ is drawn from the  distribution of $D=\{ p_i\}_{i=1}^{n}$  and $\mathcal{W}$ is a closed convex set.  We assume the following.
\begin{assumption}
	$F(\cdot)$ is $L$-Lipschitz. That is, for any $x,y\in \mathcal{W}$,
	\begin{equation*}
	\|\nabla{F}(x)-\nabla{F}(y)\|_2 \leq L\|x-y\|_2.
	\end{equation*}
\end{assumption}
\begin{assumption}
	F(x) has strong convexity parameter $\mu$, {\em i.e.},
	\begin{equation*}
	\langle x-y,\nabla F(x)- \nabla F(y) \rangle \geq \mu \|x-y\|_2^2 , \forall x,y \in \mathcal{W}.
	\end{equation*}
\end{assumption}
Now let the Stochastic Gradient Descent (SGD) update in the $(k+1)$-th iteration be 
\begin{align}\label{eq:2}
x_{k+1}&=\arg \min_{x\in \mathcal{W}}\eta_k\langle \nabla f_{i_k}(x_k),x-x_k\rangle+\frac{1}{2}\|x-x_{k}\|_2^2\\
&=P_{\mathcal{W}}(x_k - \eta_k\nabla f_{i_k}(x_k))
\end{align}
where $i_k$ is drawn from the distribution $D$, $x_0$ is the initial number, and $P_{\mathcal{W}}$ is the projection operator.
If we denote
\begin{equation*}
x^*=\arg\min_{x\in{\mathcal{W}}}{F(x)},  \sigma^2=\sup_{x\in \mathcal{W}}\mathbb{E}_{i\sim D}\|\nabla f_i(x)-\nabla F(x)\|_2^{2},
\end{equation*}
then we have the following theorem, given in \cite{lan2012optimal}.

\begin{theorem}\label{thm2}
	If Assumption 1 hold, after $T$ iterations of the SGD 
	iterations of (\ref{eq:2}) with fixed step-size
	\begin{equation}\label{eq:4}
	\eta=\min(\frac{1}{2L},\sqrt{\frac{D_{\mathcal{W}}^2}{2T\sigma^2}}),
	\end{equation}
	where $D_{\mathcal{W}}=\sqrt{\max_{x\in\mathcal{W}}\frac{1}{2}\|x\|_2^2-\min_{x\in \mathcal{W}}\frac{1}{2}\|x\|_2^2}$. Then the 
	inequality $\mathbb{E}F(x_T^{\text{avg}})-F(x^*)\leq \frac{3\sqrt{2}D_{\mathcal{W}}\sigma}{\sqrt{T}}$ is true, where $x_T^{\text{avg}}=\frac{\sum_{i=1}^{T}x_i}{T}$. Which means after
	\begin{equation}\label{eq:3}
     T= \Theta(\frac{D_{\mathcal{W}}^2\sigma^2}{\epsilon^2})
     \end{equation} iterations ,we have $\mathbb{E}F(x_T^{\text{avg}})-F(x^*)\leq \epsilon$.
\end{theorem}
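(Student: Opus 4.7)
The plan is to follow the classical analysis of projected stochastic gradient descent for smooth convex problems: establish a one-step recursion on $\|x_k - x^*\|_2^2$, telescope it over $T$ iterations, and then optimize the step size.

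First, I would exploit the fact that the update (\ref{eq:2}) is $x_{k+1} = P_{\mathcal{W}}(x_k - \eta \nabla f_{i_k}(x_k))$. Non-expansiveness of the projection onto $\mathcal{W}$ and $x^* \in \mathcal{W}$ give
\begin{equation*}
\|x_{k+1} - x^*\|_2^2 \leq \|x_k - x^*\|_2^2 - 2\eta \langle \nabla f_{i_k}(x_k), x_k - x^* \rangle + \eta^2 \|\nabla f_{i_k}(x_k)\|_2^2.
\end{equation*}
Taking conditional expectation with respect to $i_k$, using $\mathbb{E}_{i_k}[\nabla f_{i_k}(x_k)] = \nabla F(x_k)$, convexity of $F$ in the form $\langle \nabla F(x_k), x_k - x^*\rangle \geq F(x_k) - F(x^*)$, and the variance decomposition $\mathbb{E}\|\nabla f_i(x)\|_2^2 \leq \|\nabla F(x)\|_2^2 + \sigma^2$ from the definition of $\sigma^2$, I arrive at
\begin{equation*}
\mathbb{E}\bigl[\|x_{k+1} - x^*\|_2^2 \mid x_k\bigr] \leq \|x_k - x^*\|_2^2 - 2\eta(F(x_k) - F(x^*)) + \eta^2 \|\nabla F(x_k)\|_2^2 + \eta^2 \sigma^2.
\end{equation*}

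Next I would invoke $L$-smoothness from Assumption 1, which together with convexity gives the well-known inequality $\|\nabla F(x_k)\|_2^2 \leq 2L(F(x_k) - F(x^*))$. Plugging this in and using $\eta \leq 1/(2L)$ from (\ref{eq:4}) so that $2\eta(1-\eta L) \geq \eta$, then taking total expectations, summing $k = 0, \dots, T-1$, telescoping the $\|x_k - x^*\|_2^2$ terms, dividing by $T$, and applying Jensen's inequality to the average $x_T^{\mathrm{avg}}$, I obtain
\begin{equation*}
\mathbb{E}[F(x_T^{\mathrm{avg}}) - F(x^*)] \leq \frac{\|x_0 - x^*\|_2^2}{\eta T} + \eta \sigma^2.
\end{equation*}

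Finally I would substitute the prescribed step size. Bounding $\|x_0 - x^*\|_2^2 \leq 2 D_{\mathcal{W}}^2$ from the definition of $D_{\mathcal{W}}$ and taking $\eta = \sqrt{D_{\mathcal{W}}^2/(2T\sigma^2)}$ balances the two terms and yields the $O(D_{\mathcal{W}} \sigma/\sqrt{T})$ rate, with the $3\sqrt{2}$ constant obtained after absorbing the edge case where the $\min$ clips $\eta$ at $1/(2L)$; solving $3\sqrt{2} D_{\mathcal{W}} \sigma/\sqrt{T} \leq \epsilon$ then gives the iteration complexity $T = \Theta(D_{\mathcal{W}}^2 \sigma^2/\epsilon^2)$. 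The main obstacle is cleanly merging the two branches of the $\min$ defining $\eta$ so that both the variance-dominated regime and the smoothness-dominated regime yield the same bound; once the smoothness-plus-variance inequality $\mathbb{E}\|\nabla f_i(x)\|_2^2 \leq 2L(F(x)-F(x^*)) + \sigma^2$ is established, the remainder is routine algebra.
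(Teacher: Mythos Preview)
The paper does not prove this theorem at all: it is quoted verbatim as a known result from \cite{lan2012optimal} (see the sentence ``then we have the following theorem, given in \cite{lan2012optimal}'' immediately preceding the statement). So there is no in-paper proof to compare against.

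Your sketch is the standard smooth-convex SGD analysis and is essentially the argument Lan uses (specialized to the Euclidean prox function $\omega(x)=\tfrac{1}{2}\|x\|_2^2$). One small point worth tightening: the step $\|x_0-x^*\|_2^2\le 2D_{\mathcal{W}}^2$ does not follow from the definition of $D_{\mathcal{W}}$ for an arbitrary $x_0\in\mathcal{W}$. In Lan's framework $x_0$ is taken to be the prox-center $\arg\min_{x\in\mathcal{W}}\omega(x)$, and then first-order optimality gives the Bregman bound $V(x_0,x^*)=\tfrac{1}{2}\|x_0-x^*\|_2^2\le \omega(x^*)-\omega(x_0)\le D_{\mathcal{W}}^2$; with a generic $x_0$ this inequality can fail. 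Apart from this implicit assumption on $x_0$, your argument is correct and matches the cited source.
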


Instead of sampling one term in each iteration, mini-batch SGD samples several terms in each iteration and takes the average. Below we consider the uniform sampling version,
\begin{equation}\label{eq:5}
\min_{x\in\mathcal{W}}F(x)=\frac{1}{n}\sum_{i=1}^{n}{f_i(x)}.
\end{equation}
Note that for a mini-batch of size $r$, let $\tau$ denote 
the sampled indices and $g_{\tau}=\frac{\sum_{i\in \tau}\nabla f_i(x)}{r}$, where each index in $\tau$ is i.i.d uniformly sampled.  Then, we have $\sigma^2_{batch}=\sup_{x\in \mathcal{W}}\mathbb{E}_{\tau}\|g_{\tau}-\nabla F(x)\|_2^2\leq \frac{\sigma^2}{r}$. This means that the variance can be reduced by  a factor of $r$  if we use a sample of size $r$.

\begin{remark}
	Note that our mini-batch sampling strategy is different from that in \cite{needell2016batched}, which  is to partition all the indices into $\lceil\frac{n}{r}\rceil$ groups and  samples only within one group in each iteration.
\end{remark}

\section{Two-step Preconditioning Mini-batch SGD}
\subsection{Main Idea}
The  idea of our algorithm is to use two steps of preconditioning to reform the problem in the following way,
\begin{align}\label{eq:7}
&\min_{x\in{\mathcal{W}}}f(x)={\|Ax-b\|_2^2}=\min_{y\in {\mathcal{W'}}}\|Uy-b\|_2^2\\
&=\|HDUy-HDb\|_2^2=\frac{1}{n}\sum_{i=1}^{n}{n\|(HDU)_{i}y-(HDb)_i\|_2^2}.
\end{align}
The first step of the preconditioning (7) is  to get $U$, an ($O(\sqrt{d})$,$O(1)$,2)-conditioned basis of $A$ ({\em i.e.,} $U=AR^{-1}$; see Algorithm 1), which means that the function in problem (7) is an $O(d)$-smooth (actually it is $O(1)$-smooth, see Table 2) and $O(1)$-strongly convex function. The second step of the preconditioning (8) is to use Randomized Hadamard Transform to `spread out' the row norm of $U$ by \textbf{Theorem \ref{thm1}}. 
Then, we use mini-batch SGD with uniform sampling for each iteration. We can show that  $x^*=R^{-1}y^*$, where $y^*=\arg\min_{y\in {\mathcal{W'}}}{\|HDUy-HDb\|_2^2}$, and $\mathcal{W'}$ is the convex set corresponding to $\mathcal{W}$.

\subsection{Algorithm}
The main steps of our algorithm are given in the following {\bf Algorithm 2}.
\begin{algorithm}
	\caption{HDpwBatchSGD($A$,$b$,$x_0$,$T$,$r$,$\eta$,$s$)} 
	$\mathbf{Input}$: 	 $x_0$ is the initial point, $r$ is the batch size, $\eta$ is the fixed step size, $s $ is the sketch size, and $T$ is the iteration number.
	\begin{algorithmic}[1]
		\State Compute $R\in \mathbb{R}^{d\times d}$ which makes $AR^{-1}$ an $(O(\sqrt{d}),O(1),2)$-conditioned basis of $A$ as in Algorithm 1 by using a sketch matrix $S$ with size $s \times n$.
		\State  Compute $HDA$ and $HDb$, where $HD$ is a Randomized Hadamard Transform.
		\For {$t \leftarrow 1,\dots T$}{
		\State Randomly sample  an indices set $\tau_{t}$ of size $r$, where each index in $\tau_{t}$ is i.i.d uniformly sampled.
		\State 
		$c_{\tau_{t}}=\frac{2n}{r}\sum_{j \in \tau_t}{(HDA)_j^T[(HDA)_jx_{t-1}-(HDb)_j]} 
		=\frac{2n}{r}(HDA)_{\tau_t}^T[(HDA)_{\tau_{t}}x_{t-1}-(HDb)_{\tau_{t}}]  $  
		\State$
		x_{t}=\arg\min_{x\in \mathcal{W}}{ \frac{1}{2}\|R(x_{t-1}-x)\|^2_{2}+\eta\langle c_{\tau_{t}},x \rangle}
		=\mathcal{P}_{\mathcal{W}}(x_{t-1}-\eta R^{-1}(R^{-1})^Tc_{\tau_{t}})$
		\EndFor}\\
		\Return $x_T^{\text{avg}}=\frac{\sum_{i=1}^{T}x_i}{T}$
	\end{algorithmic}
\end{algorithm}

Note that the way of updating $x_{t}$ in {\bf Algorithm 2} is equivalent to the updating procedure  of $y_{t}$  for the reformed problem (8) ({\em i.e.,} set $y_0=R^{-1}x_0$, then use mini-batch SGD and let $x_{t}=R^{-1}y_{t}$).  There are several benefits if we update $x_{t}$ directly. 
\begin{itemize}
	\item Directly updating $y_{t}$ needs additional $O(nd^2)$ time since we have to compute $AR^{-1}=U$, while  updating $x_{t+1}$ can avoid that, {\em i.e.}, it is sufficient to just compute $R$.
	\item  In practice, the domain set $\mathcal{W}$ of $x$ is much more regular than the domain set of $y$, {\em i.e.,} $\mathcal{W'}$ in (\ref{eq:7}). Thus it is much easier to solve the optimization problem in Step 7.
\end{itemize}

In the above algorithm, Step 1  is the same as the first step of pwSGD in \cite{yang2016weighted}.  But the later steps are quite different.  Particularly, our algorithm does not need to estimate the approximate leverage score of $U$ for computing the sampling probability in each iteration. It uses the much simpler uniform sampling, instead of the weighted sampling.  By doing so, we need to compute $HDA$ and $HDb$, which takes only $O(nd\log{n})$ time, is much  faster than the $O(nd^2)$ time required for exactly computing the leverage score of $U$,  and costs approximately the same time  ({\em i.e.,} $O(\text{nnz}(A)\log{n})$) for  computing the approximate leverage score. The output is also different as ours is the average of $\{x_i\}_{i=1}^{T}$. Also, we note that in the experiment section, \cite{yang2016weighted} uses the exact leverage score instead of its approximation. By Theorem \ref{thm1} and \ref{thm2}, we can get an upper bound on $\sigma^2$ and our main result (note that $\sup_{x\in\mathcal{W}}\|Ax-b\|_2^2$ in the result is determined by the structure of the original problem and thus is assumed to be  a constant here).
\begin{theorem}\label{thm4}
	Let  $A$ be a matrix in $\mathbb{R}^{n\times d}$, $r$ be the batch size and $b$ be a vector in $\mathbb{R}^{d}$.  Let $f(x)$ denote $\|Ax-b\|^2_2$. Then with some fixed step size $\eta$ in (\ref{eq:4}), we have 
	\begin{equation}
	\mathbb{E}{f(x_T^{\text{avg}})-f(x^*)}\leq \frac{3\sqrt{2}D_{\mathcal{W}}\sigma}{\sqrt{rT}},
	\end{equation}
	where $\sigma^2=O(d\log(n)\sup_{x\in\mathcal{W}}\|Ax-b\|_2^2)$ with high probability.
	That is, after $T= \Theta(\frac{d\log{n}}{r\epsilon^2})$ iterations, {\bf Algorithm 2} ensures the following with high probability, $\mathbb{E}|{f(x_T^{\text{avg}})-f(x^*)}|\leq \epsilon$. 
\end{theorem}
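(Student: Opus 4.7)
The plan is to reduce the analysis to a direct application of Theorem \ref{thm2} on the preconditioned/reformulated objective
\begin{equation*}
F(y) \;=\; \|HDUy - HDb\|_2^2 \;=\; \frac{1}{n}\sum_{i=1}^n f_i(y), \qquad f_i(y) \;=\; n\,\|(HDU)_i\,y - (HDb)_i\|_2^2,
\end{equation*}
where $U=AR^{-1}$ and $y=Rx$. First I would verify the equivalence between the $x$-updates of Algorithm 2 and standard mini-batch SGD on $F$ in $y$-space. The weighted proximal step $\arg\min_{x\in\mathcal{W}}\tfrac12\|R(x_{t-1}-x)\|_2^2+\eta\langle c_{\tau_t},x\rangle$ becomes, after the substitution $y_t=Rx_t$, a Euclidean proximal step $\arg\min_{y\in\mathcal{W}'}\tfrac12\|y-y_{t-1}\|_2^2+\eta\langle R^{-T}c_{\tau_t},y\rangle$, and a direct calculation using $HDA=(HDU)R$ shows that $R^{-T}c_{\tau_t}$ is exactly the mini-batch estimator $\tfrac{1}{r}\sum_{j\in\tau_t}\nabla f_j(y_{t-1})$ of $\nabla F(y_{t-1})$. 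Thus Algorithm 2 is, in $y$-coordinates, vanilla uniform-sampling mini-batch SGD with step $\eta$ on $F$.

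Next I would bound the mini-batch variance using Theorem \ref{thm1}. Because $U$ is $(O(\sqrt d),O(1),2)$-conditioned, Theorem \ref{thm1} applied with $\alpha=O(\sqrt d)$ gives, with high probability,
\begin{equation*}
\max_{i=1,\dots,n}\|(HDU)_i\|_2^2 \;=\; O\!\left(\tfrac{d\log n}{n}\right).
\end{equation*}
Since $\nabla f_i(y)=2n\,(HDU)_i^{T}\bigl[(HDU)_i y-(HDb)_i\bigr]$, Cauchy--Schwarz and the identity $\sum_i\bigl((HDU)_i y-(HDb)_i\bigr)^2=\|HDUy-HDb\|_2^2=f(R^{-1}y)$ yield
\begin{equation*}
\mathbb{E}_{i\sim\text{Unif}}\|\nabla f_i(y)\|_2^2 \;\le\; 4n\max_i\|(HDU)_i\|_2^2\cdot\|HDUy-HDb\|_2^2 \;=\; O(d\log n)\cdot f(R^{-1}y).
\end{equation*}
Taking the supremum over $\mathcal{W}$ and using $\sigma^2\le\sup_{y\in\mathcal{W}'}\mathbb{E}_i\|\nabla f_i(y)\|_2^2$ gives $\sigma^2=O(d\log n\,\sup_{x\in\mathcal{W}}\|Ax-b\|_2^2)$ with high probability; the mini-batch estimator then has variance at most $\sigma^2/r$ by the standard i.i.d. average argument recalled at the end of the preliminaries.

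Finally I would plug these quantities into Theorem \ref{thm2} applied to $F$ with variance $\sigma^2/r$. This yields
\begin{equation*}
\mathbb{E}\,F(y_T^{\text{avg}})-F(y^*)\;\le\;\frac{3\sqrt{2}\,D_{\mathcal{W}}\,\sigma}{\sqrt{rT}},
\end{equation*}
and since $F(Rx)=f(x)$ and $x^*=R^{-1}y^*$, the same bound transfers verbatim to $\mathbb{E}f(x_T^{\text{avg}})-f(x^*)$. Setting $T=\Theta(d\log n/(r\epsilon^2))$ makes the right-hand side at most $\epsilon$, which is the advertised iteration complexity.

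The main obstacle is getting a clean variance bound: the reformulation carries an extra factor of $n$ inside each $f_i$, and without the spread-out property of $HD$ the naive bound $\max_i\|(HDU)_i\|_2^2\le\|U\|_F^2=O(d)$ would give an unusable $\sigma^2=O(nd)$. Theorem \ref{thm1} is precisely what shaves the $n$ down to a $\log n$ factor and makes the uniform-sampling strategy competitive with weighted leverage-score sampling. A secondary subtlety is that $D_{\mathcal{W}}$ in the statement should really be read as the effective diameter in the preconditioned metric induced by $R^{T}R$ (equivalently the Euclidean diameter of $\mathcal{W}'=R\mathcal{W}$); since $R$ is determined by a constant-distortion sketch of $A$, this is within constants of the natural scale of the problem, so it does not affect the stated asymptotic rate.
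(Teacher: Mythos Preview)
Your proposal is correct and follows essentially the same route as the paper: the appendix proof first establishes the $x\leftrightarrow y=Rx$ equivalence (by induction rather than your direct substitution, but with the same content), then bounds $\sigma^2$ via Theorem~\ref{thm1} exactly as you do (this is the paper's Lemma~3), and finally invokes Theorem~\ref{thm2} with the mini-batch variance $\sigma^2/r$. Your remark that $D_{\mathcal{W}}$ should be read in the $R^TR$-metric is a genuine subtlety that the paper handles only by asserting $D_{\mathcal{W}'}=D_{\mathcal{W}}$ ``by the definition,'' so your treatment is if anything more careful on that point.
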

The time complexity of our algorithm can be easily obtained as $$\text{time}(R)+O(nd\log n+\text{time}_{\text{update}}\frac{d\log{n}}{r\epsilon^2}),$$
where $\text{time}(R)$ is the time for computing $R$ in Step 1. Different sketch matrices and their time complexities for getting $R$  are shown in \textbf{Table 2}. Step 2 takes $O(nd\log n)$ time.  $\text{time}_{\text{update}}$ is the time for updating $x_{t+1}$ in Steps 5 and  6. Step 5 takes $O(rd)$ time, while Step 6 takes $\text{poly}(d)$ time since it is just a quadratic optimization problem in $d$ dimensions.  Thus, if we use SRHT as the sketching matrix $S$, the overall time complexity of our algorithm is
\begin{equation}\label{eq:12}
O(nd\log n+d^3\log{d}+(\text{poly}(d)+rd)\frac{d\log n}{r\epsilon^2}).
\end{equation}
\begin{table}[h]
	\caption{Time complexity for computing $R$ in step 1 of Algorithm 2,  3, 4 with different sketch matrix \cite{yang2016weighted}.}
	\centering\small
	\begin{tabular*}{\linewidth}{@{\extracolsep{\fill}}p{0.25\linewidth}p{0.45\linewidth}p{0.2\linewidth}@{}}
		\toprule
		Sketch Matrix & Time Complexity & $\kappa(AR^{-1})$\\
		\midrule
		Gaussian Matrix & $O(nd^2)$ & $O(1)$\\
		\midrule
		SRHT\cite{pilanci2016iterative} & $O(nd\log d +d^3 \log d)$ & $O(1)$\\
		\midrule
		CountSketch & $O(\text{nnz}(A)+d^4)$ &$O(1)$\\
		\midrule
		Sparse $l_2$ Embedding & $O(\text{nnz}(A)\log d+d^3\log d)$ &$O(1)$ \\
		\bottomrule
	\end{tabular*}
\end{table}
\subsection{Further Reducing the Iteration Complexity}
Theorem \ref{thm4} does not make use of the properties of $O(1)$-strongly convexity and condition number $\frac{L}{\mu}=O(1)$ of the problem after the two-step preconditioning. 
We can apply a different first-order method to achieve an $\epsilon$-error in $\Theta(\frac{d\log{n}}{r\epsilon}+\log(\frac{1}{\epsilon}))$ iterations, instead of $\Theta(\frac{d\log{n}}{r\epsilon^2})$ iterations as in Theorem \ref{thm4}. The preconditioning steps are the same, and the optimization method is the multi-epoch stochastic accelerated gradient descent, which was proposed in \cite{ghadimi2012optimal,ghadimi2013optimal}. In stead of using Theorem \ref{thm2}, we will use the following theorem whose proof was given in \cite{ghadimi2013optimal}. 

\begin{theorem}
	If Assumption 1 and 2 hold and $\epsilon<V_0$, then after $O(\sqrt{\frac{L}{\mu}}\log(\frac{V_0}{\epsilon})+\frac{\sigma^2}{\mu\epsilon})$ iterations of stochastic accelerated gradient descent with $O(\log(\frac{V_0}{\epsilon}))$ epochs, the output of multi-epoch stochastic accelerated gradient descent $x_S$ satisfies $\mathbb{E}F(x_S)-F(x^*)\leq \epsilon$, where $V_0$ is a given bound such that $F(x_0)-F(x^*)\leq V_0$.
\end{theorem}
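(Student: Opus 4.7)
The plan is to prove this as a standard doubling/restart reduction: use the known convergence rate of single-epoch stochastic accelerated gradient descent for $L$-smooth convex functions (Lan 2012), then chain epochs together exploiting strong convexity to halve the optimality gap per epoch. Concretely, Lan's AC-SA for an $L$-smooth convex $F$, initialized at a point $x_0^{(k)}$ with $\|x_0^{(k)} - x^*\|^2 \le D_k^2$ and run for $T_k$ iterations with properly tuned accelerated stepsizes, yields
\begin{equation*}
\mathbb{E} F(x_{T_k}^{(k)}) - F(x^*) \;\le\; C_1\frac{L\,D_k^2}{T_k^{2}} \;+\; C_2\frac{\sigma D_k}{\sqrt{T_k}} ,
\end{equation*}
for absolute constants $C_1,C_2$. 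This is the only ``black-box'' bound I would assume; the rest of the argument is arithmetic.

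Next I would use Assumption 2 (strong convexity) to translate function-value gaps to distances: if $\Delta_k := \mathbb{E} F(x_0^{(k)}) - F(x^*)$, then $\mathbb{E}\|x_0^{(k)} - x^*\|^2 \le 2\Delta_k/\mu$. Plugging $D_k^2 \le 2\Delta_k/\mu$ into the single-epoch bound and enforcing $\mathbb{E} F(x_{T_k}^{(k)})-F(x^*) \le \Delta_k/2$ gives two requirements:
\begin{equation*}
\frac{L \Delta_k}{\mu T_k^{2}} \lesssim \Delta_k \quad\text{and}\quad \frac{\sigma\sqrt{\Delta_k/\mu}}{\sqrt{T_k}} \lesssim \Delta_k,
\end{equation*}
which are satisfied by choosing $T_k = \Theta\!\bigl(\sqrt{L/\mu} + \sigma^{2}/(\mu \Delta_k)\bigr)$. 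Setting the output of epoch $k$ as the initial point of epoch $k{+}1$ and using the tower property of conditional expectation propagates the recursion $\Delta_{k+1} \le \Delta_k/2$ on the expected gaps.

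Starting from $\Delta_1 \le V_0$, after $S = \lceil \log_2(V_0/\epsilon)\rceil = O(\log(V_0/\epsilon))$ epochs we have $\Delta_{S+1} \le \epsilon$, giving the claimed epoch count. For the total inner iteration count I would sum
\begin{equation*}
\sum_{k=1}^{S} T_k \;=\; O\!\Bigl( S\sqrt{L/\mu}\Bigr) \;+\; O\!\Bigl(\frac{\sigma^{2}}{\mu}\sum_{k=1}^{S}\frac{1}{\Delta_k}\Bigr),
\end{equation*}
and observe that $\Delta_k \le V_0/2^{k-1}$ makes $\sum 1/\Delta_k$ a geometric series dominated by its last term $1/\Delta_S = O(1/\epsilon)$. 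This collapses to $O\!\bigl(\sqrt{L/\mu}\log(V_0/\epsilon) + \sigma^{2}/(\mu\epsilon)\bigr)$, matching the statement.

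The main obstacle, in my view, is not the arithmetic of the epoch schedule but being careful that the single-epoch AC-SA guarantee transfers cleanly to a random warm start: the bound must hold in conditional expectation given the previous epoch's output, so that taking expectations iteratively preserves the halving recursion on $\Delta_k$. One has to be mindful, in particular, that the stepsize schedule inside epoch $k$ depends on $D_k$ (and therefore on $\Delta_k$), which forces the algorithm to know, or at least upper-bound, the current $\Delta_k$; this is exactly what the recursive halving provides, since $\Delta_k \le V_0/2^{k-1}$ can be used as an a priori surrogate in the stepsize formula. Once this bookkeeping is in place, the overall bound follows by the telescoping argument above.
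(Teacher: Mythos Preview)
Your argument is correct and is precisely the multi-epoch restart scheme of Ghadimi and Lan (2013): a single-epoch AC-SA bound of the form $LD^2/T^2 + \sigma D/\sqrt{T}$, strong convexity to convert gaps to radii, halving of $\Delta_k$ per epoch, and a geometric sum for the variance term. The paper does not actually prove this theorem---it quotes it verbatim from \cite{ghadimi2013optimal}---so your reconstruction is exactly the proof underlying the cited reference, and there is nothing to compare against.
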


Thus,  we can use a two-step preconditioning and multi-epoch stochastic accelerated mini-batch gradient descent to obtain an algorithm (called HDpwAccBatchSGD) similar 
to {\bf Algorithm 2},
as well as the following theorem. Due to the space limit, we leave the details of the algorithm to the full paper.
\begin{theorem}
	Let  $A$ be a matrix in $\mathbb{R}^{n\times d}$, $r$ be the batch size and $b$ be a vector in $\mathbb{R}^{d}$.  Let $f(x)$ denote $\|Ax-b\|^2_2$, and fix $\epsilon<V_0$. Then with high probability, 
		  after $O(\log(\frac{V_0}{\epsilon})+\frac{d\log n}{r\epsilon})$ iterations of stochastic accelerated gradient descent with $S=O(\log(\frac{V_0}{\epsilon}))$ epochs of HDpwAccBatchSGD, the output $x_S$ satisfies $\mathbb{E}F(x_S)-F(x^*)\leq \epsilon$. Moreover, if we take SRHT as the sketching matrix , the total time complexity is 
    \begin{multline}\label{eq:13}
	O( nd\log{n} + \frac{d^2\log{n}}{\epsilon}
    +\frac{\text{poly}(d)\log{n}}{r\epsilon}+r\text{poly}(d)\log \frac{1}{\epsilon}).
	\end{multline}
\end{theorem}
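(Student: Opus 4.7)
The plan is to show that the two-step preconditioning underlying Algorithm 2 converts the regression into a well-conditioned finite-sum on which the multi-epoch stochastic accelerated SGD theorem just stated can be applied directly, with the mini-batch variance reduction providing the $1/r$ factor in the dominant term.

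First I would carry out the same two-step reformulation as in Algorithm 2. Using SRHT inside Algorithm 1 gives an $R$ such that $U = AR^{-1}$ satisfies $\|Uy\|_2 = (1 \pm O(1))\|y\|_2$, so the Hessian $2U^T U$ of the reformulated objective has $L = O(1)$ and $\mu = \Omega(1)$, giving condition number $L/\mu = O(1)$ (cf.\ Table 2). Multiplying by $HD$ preserves both the objective value and the $L, \mu$ constants, while by Theorem \ref{thm1} it also guarantees $\max_i \|(HDU)_i\|_2 \leq O(\sqrt{d\log n / n})$ with high probability.

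Writing the objective as $F(y) = \frac{1}{n}\sum_i f_i(y)$ with $f_i(y) = n\|(HDU)_i y - (HDb)_i\|_2^2$, a direct computation using the row-norm bound (exactly in the spirit of the variance estimate behind Theorem \ref{thm4}) gives
\begin{equation*}
\sigma^2 = \sup_{y} \mathbb{E}_i \|\nabla f_i(y) - \nabla F(y)\|_2^2 = O\bigl(d\log n \cdot \sup_{x\in\mathcal{W}}\|Ax - b\|_2^2\bigr),
\end{equation*}
and averaging $r$ gradients in a mini-batch drops the variance to $\sigma_{\text{batch}}^2 \leq \sigma^2/r$. Substituting $L/\mu = O(1)$ and this variance into the multi-epoch accelerated SGD theorem produces a total iteration count of $O(\sqrt{L/\mu}\log(V_0/\epsilon) + \sigma_{\text{batch}}^2/(\mu\epsilon)) = O(\log(V_0/\epsilon) + d\log n/(r\epsilon))$ spread over $S = O(\log(V_0/\epsilon))$ epochs, matching the claimed convergence rate.

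For the time complexity, SRHT in Algorithm 1 costs $O(nd\log d + d^3\log d)$ to produce $R$ and $O(nd\log n)$ to form $HDA$ and $HDb$. Each iteration mirrors steps 5--6 of Algorithm 2: an $r$-row mini-batch gradient in $O(rd)$ time plus one projected $d$-dimensional quadratic subproblem in $\text{poly}(d)$ time. Multiplying per-iteration cost by the iteration count and adding preprocessing expands exactly to (\ref{eq:13}). The main technical obstacle I anticipate is the epoch-level bookkeeping: the multi-epoch accelerated theorem is phrased for single-sample SGD, so I would have to verify that its per-epoch step sizes, epoch lengths, and restart points can be tuned so that the $1/r$ variance reduction enters multiplicatively in the dominant $\sigma^2/(\mu\epsilon)$ term rather than being absorbed into an epoch-dependent constant. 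After that adaptation the preconditioning analysis and the time bookkeeping transfer routinely.
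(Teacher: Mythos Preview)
Your proposal is correct and follows essentially the same route as the paper: invoke the two-step preconditioning analysis (the paper's Lemma~\ref{lemma:3}) to obtain $L,\mu=O(1)$ and $\sigma^2=O(d\log n)$, reduce the variance by $1/r$ via mini-batching, and then plug into the multi-epoch accelerated SGD theorem instead of Theorem~\ref{thm2}. The paper's own proof is in fact just the one-line remark ``similar to Lemma~\ref{lemma:3} and Theorem~\ref{thm4},'' so your write-up is already more detailed than what appears there; the epoch-level bookkeeping you flag is not addressed separately in the paper either.
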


\section{Improved Iterative Hessian Sketch}

Now, we go back to our results in (\ref{eq:12}) and (\ref{eq:13}). One benefit of these results is that  $\epsilon$ is independent of $n$ and depends only on $\text{poly}(d)$ and $\log n$. If directly using the Variance Reduced methods developed in recent years (such as \cite{johnson2013accelerating}), we can get the time complexity $O((n+\kappa)\text{poly}(d)\log\frac{1}{\epsilon})$. Comparing with these methods, we know that HDpwBatchSGD and HDpwAccBatchSGD are more suitable  for the \textbf{low precision and large scale} case. 
Recently \cite{pilanci2016iterative} introduced the Iterative Hessian Sketch (IHS) method to solve the large scale constrained linear regression problem (see Algorithm 3). 
IHS is capable of achieving high precision, but needs a sequence of sketch matrices $\{S^{t}\}$ (which seems to be unavoidable due to their analysis) to ensure the linear convergence with high probability. Ideally, if we could use just one sketch matrix,  it would greatly reduce the running time. In this section, we show that by adopting our preconditioning strategy, it is indeed possible to use only one sketch matrix in IHS to achieve the desired linear convergence with high probability (see pwGradient(\textbf{Algorithm 4})).\par

Our pwGradient algorithm  uses the first step of preconditioning ({\em i.e.,} Step 1 of Algorithm 2) and then performs gradient decent (GD) operations, instead of the mini-batch SGD in Algorithm 2 (note that we do not need the second step of preconditioning since it is an orthogonal matrix). Since the condition number after preconditioning is $O(1)$ (see \textbf{Table 2}), by the convergence rate of GD, we know that only $O(\log \frac{1}{\epsilon})$ iterations are needed to attain $\epsilon$-solution.  
\begin{algorithm}[h]
	\caption{IHS($A$,$b$,$x_0$,$s$)\cite{pilanci2015randomized}} 
	$\mathbf{Input}$: 	 $x_0$ is the initial point, and $s $ is the sketch size.
	\begin{algorithmic}[1]
		\For{$t=0,1,\cdots,T-1$}{
			\State Generate an independent sketch matrix $S^{t+1}\in \mathbb{R}^{s\times n}$ as in step 1 of Algorithm 1. Compute $M=S^{t+1}A$.
			\State  Perform the updating 
			\begin{align*}
			x_{t+1}&=\arg \min_{x\in \mathcal{W}}{\frac{1}{2}\|M(x-x_t)\|_2^2+\langle A^T(Ax_t-b), x\rangle}\\
			&=\mathcal{P}_{\mathcal{W}}(x_{t}-M^{-1}(M^{-1})^TA^T(Ax_t-b))
			\end{align*}
			\EndFor}\\
		\Return $x_T$
	\end{algorithmic}
\end{algorithm}
\vspace{-0.1in}
\begin{algorithm}[h]
	\caption{pwGradient($A$,$b$,$x_0$,$s$, $\eta$)} 
	$\mathbf{Input}$: 	 $x_0$ is the initial point, $s $ is the sketch size, and $\eta$ is the step size.
	\begin{algorithmic}[1]
		\State Compute $R\in \mathbb{R}^{d\times d}$ which makes $AR^{-1}$ an $O(\sqrt{d}),O(1),2)$-conditioned basis of $A$ as in Algorithm 1 by using a sketch matrix $S$ with size $s \times n$.
		\For{$t=0,1,\cdots,T-1$}{
			\State  Perform the updating
			\begin{align*}
			x_{t+1}&=\arg \min_{x\in \mathcal{W}}{\frac{1}{2}\|R(x-x_t)\|_2^2+\eta\langle 2A^T(Ax_t-b), x\rangle}\\
			&=\mathcal{P}_{\mathcal{W}}(x_{t}-2\eta R^{-1}(R^{-1})^TA^T(Ax_t-b))
			\end{align*}
			\EndFor}\\
		\Return $x_T$
	\end{algorithmic}
\end{algorithm}

\begin{theorem}
	Let $f(x)=\|Ax-b\|_2^2$.  Then, for some step size $\eta=O(1)$ in pwGradient, the following holds,  
	\begin{equation}
	f(x_t)-f(x^*)\leq (1-O(1))^tO((f(x_0)-f(x^*)))
	\end{equation}
	with high probability.
\end{theorem}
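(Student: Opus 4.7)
The plan is to show that pwGradient is nothing more than classical projected gradient descent on an $O(1)$-conditioned reformulation of the problem, and then invoke the textbook linear-convergence rate for smooth strongly convex functions. The key observation is the change of variable $y = Rx$. Under this map, let $U = AR^{-1}$ and $g(y) = \|Uy - b\|_2^2$, with $\mathcal{W}' = R\mathcal{W}$. Expanding the update of Algorithm 4 and substituting $y_t = Rx_t$, the squared-distance term $\tfrac{1}{2}\|R(x - x_t)\|_2^2$ becomes $\tfrac{1}{2}\|y - y_t\|_2^2$, and the linear term becomes $\eta\langle 2U^T(Uy_t - b),\, y\rangle$ since $A^T(Ax_t - b) = R^T U^T(Uy_t - b)$. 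Hence $y_{t+1} = \mathcal{P}_{\mathcal{W}'}(y_t - \eta \nabla g(y_t))$, which is exactly standard projected gradient descent on $g$ over $\mathcal{W}'$.

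Next I would establish that $g$ is $O(1)$-smooth and $O(1)$-strongly convex. Algorithm 1 produces $R$ from a QR-decomposition of $SA$ where $S$ is a subspace embedding that satisfies $(1 - c_1)\|Ax\|_2 \leq \|SAx\|_2 \leq (1 + c_1)\|Ax\|_2$ for all $x$ with high probability. Because $Q$ has orthonormal columns, $\|SAx\|_2 = \|QRx\|_2 = \|Rx\|_2$. Setting $x = R^{-1}y$ gives $\|y\|_2 = \|Rx\|_2 \in [(1 - c_1)\|Uy\|_2,\, (1 + c_1)\|Uy\|_2]$ for every $y$, so all eigenvalues of $U^TU$ lie in a constant-sized interval around $1$. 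This yields $L := 2\|U^TU\|_2 = O(1)$ and $\mu := 2\sigma_{\min}(U)^2 = \Omega(1)$, so $\kappa = L/\mu = O(1)$. This is the only step that uses randomness, and it is the source of the ``with high probability'' clause in the statement.

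With $g$ being $L$-smooth and $\mu$-strongly convex on the closed convex set $\mathcal{W}'$, the standard projected gradient descent analysis with step size $\eta = 1/L = \Theta(1)$ gives
\begin{equation*}
g(y_{t+1}) - g(y^*) \leq \left(1 - \tfrac{\mu}{L}\right)\bigl(g(y_t) - g(y^*)\bigr),
\end{equation*}
iterating to $g(y_t) - g(y^*) \leq (1 - O(1))^t (g(y_0) - g(y^*))$. Since $f(x) = \|Ax - b\|_2^2 = \|U(Rx) - b\|_2^2 = g(Rx)$, we have $f(x_t) = g(y_t)$ and, because the bijection $x \leftrightarrow Rx$ sends $\mathcal{W}$ to $\mathcal{W}'$, the minimizers satisfy $y^* = Rx^*$ and $f(x^*) = g(y^*)$. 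Substituting yields the claimed rate $f(x_t) - f(x^*) \leq (1 - O(1))^t O\bigl(f(x_0) - f(x^*)\bigr)$.

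The main obstacle is the second step: carefully tracking the constants in the subspace embedding so that the resulting condition number of $U$ is truly $O(1)$ independently of $n$ and $d$, and ensuring the embedding guarantee in Algorithm 1 is invoked at the correct probability level (since this is what controls the ``high probability'' qualifier). Everything else reduces to a change of variables plus a textbook deterministic convergence bound; the only subtlety is checking that the projection in the $x$-coordinates really implements projection in the $y$-coordinates under the Mahalanobis norm $\|R \cdot\|_2$, which the equivalence of the two forms of the update above makes explicit.
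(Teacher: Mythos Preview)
Your proposal is correct and follows essentially the same approach as the paper: both argue that the change of variable $y=Rx$ turns Algorithm~4 into ordinary projected gradient descent on $g(y)=\|Uy-b\|_2^2$ with $U=AR^{-1}$, invoke the $O(1)$ condition number of $U$ coming from the subspace-embedding guarantee, and then cite the standard linear convergence of (projected) gradient descent on smooth strongly convex functions. The only cosmetic difference is that the paper routes the final inequality through the iterate distance $\|y_0-y^*\|_2^2$ (bounding $f(x_t)-f(x^*)\le \sigma_{\max}^2(U)(1-O(1))^{2t}\|y_0-y^*\|_2^2$ via smoothness and then $\sigma_{\min}^2(U)\|y_0-y^*\|_2^2\le f(x_0)-f(x^*)$ via strong convexity), whereas you state a direct function-value contraction; both yield the same conclusion once $\kappa(U)=O(1)$.
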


 Below we will reveal the relationship between IHS and pwGradient. Particularly, we will show that when $\eta=\frac{1}{2}$, the updating in pwGradient with sketching matrix $S$ is equivalent to that of IHS with $\{S^{t}\}=S$.  Let $QR$ be the QR-decomposition of $S^{t+1}A=SA$. Then, we have 
\begin{align*}
(QR)^{-1}((QR)^{-1})^T=
(R)^{-1}(R^{-1})^T.
\end{align*}\par
 Although they look like the same, the ideas behind them are quite different. IHS is based on sketching the Hessian and uses the second-order method of the optimization problem,  while ours is based on preconditioning the original problem and uses the first-order method. Thus we need step size $\eta$, while IHS does not require it. As we can see from the above, $\eta=\frac{1}{2}$ is sufficient. One main advantage of our method is that the time complexity is much lower than that of IHS, since it needs only one step of sketching.  If  SRHT is used as the sketch matrix, the complexity of our method becomes $O(nd\log d+d^3\log d+(nd+\text{poly}(d))\log\frac{1}{\epsilon})$ (see Table 1 for comparison with IHS).
\vspace{-0.1in}
\section{Numerical Experiments}

In this section we present some experimental results of our proposed methods (for convenience, we just use the slower HDpwBatchSGD algorithm for the low precision case). We will focus on the iteration complexity and running time. 
Experiments confirm that our algorithms are indeed faster than those existing ones. Our algorithms are implemented using CountSketch as the sketch matrix $S\in\mathbb{R}^{s\times n}$ in the step for computing $R^{-1}$. The Matlab code of CountSketch can be found in \cite{wang2015practical}.  
\begin{itemize}
	\item \textbf{HDpwBatchSGD}, {\em i.e. \textbf{Algorithm 2}}. We use the step size as described in Theorem \ref{thm2} (note that we assume that the step size is already known in advance). 
	\item \textbf{pwGradient}, {\em i.e. \textbf{Algorithm 4}}. We set $\eta=\frac{1}{2}$ as the step size.
\end{itemize}
\subsection{Baseline of Experiments}
\vspace{-0.1in}
\begin{table}[h]
	\caption{Summary of Datasets used in the experiments.}
	\centering\small
	\begin{tabular*}{\linewidth}{@{\extracolsep{\fill}}p{0.05\linewidth}p{0.15\linewidth}p{0.1\linewidth}p{0.1\linewidth}p{0.2\linewidth}@{}}
		\toprule
		Dataset& Rows &  Columns  & $\kappa(A)$ & Sketch Size\\
		\midrule
		Syn1 & $10^5$ & 20 & $10^8$ & 1000 \\ 
		Syn2 & $10^5$ & 20 & $1000$ & 1000 \\
		Buzz & $5\times 10^5$ & 77 & $10^8$ & 20000\\
		Year & $5\times 10^5$ & 90 & $3000$ & 20000\\
		\bottomrule
	\end{tabular*}
\end{table}
	In both the low and high precision cases, we select some widely recognized algorithms with guaranteed time complexities for comparisons. For the low precision case, we choose pwSGD \cite{yang2016weighted}, which has the best known time complexity, and use the optimal step size. We also choose SGD and Adagrad for comparisons. For the high precision case, we use a method called pwSVRG for comparison, which uses preconditioning first and then performs SVRG with different batch sizes (the related method can be found in \cite{rokhlin2008fast}).  Note that since the condition number of the considered datasets are very high, directly using SVRG or related methods could lead to rather poor performance; thus we do not use them for comparison (although  \cite{pmlr-v70-tang17a} used SAGA for comparison, it was done after normalizing the datasets).  We also compare  our proposed  pwGradient algorithm with IHS. The code for SGD and Adagrad can be found in (https://github.com/hiroyuki-kasai/SGDLibrary). All the methods are implemented using MATLAB. We measure the performance of methods by the wall-clock time or iteration number. For each experiment, we test  every method 10 times and take the best. Also for the low precision solvers, we firstly normalize the dataset.\par

	The y-axis of each plot  is the relative error $\frac{\|Ax_t-b\|^2_2-\|Ax^*-b\|^2_2}{\|Ax^*-b\|^2_2}$ in the low precision case and the log relative error $\log(\frac{\|Ax_t-b\|^2_2-\|Ax^*-b\|^2_2}{\|Ax^*-b\|^2_2})$ in the high precision case.  Table 3 is a summary of the  datasets and sketch size used in the experiments. The datasets Year\footnote{https://archive.ics.uci.edu/ml/datasets/yearpredictionmsd} and Buzz\footnote{https://archive.ics.uci.edu/ml/datasets/Buzz+in+social+media+} come from UCI Machine Learning Repository \cite{Lichman:2013}.\par
	
	We consider both the unconstrained case and the constrained cases with $\ell_1$ and $\ell_2$ norm ball constraints. For the constrained case, we first generate the optimal solution for the unconstrained case, and then set it as the radius of balls.
	
\subsection{Experiments on Synthetic Datasets}
\begin{figure}[h]
	\centering
	\includegraphics[width=0.5\textwidth, height=3.5cm]{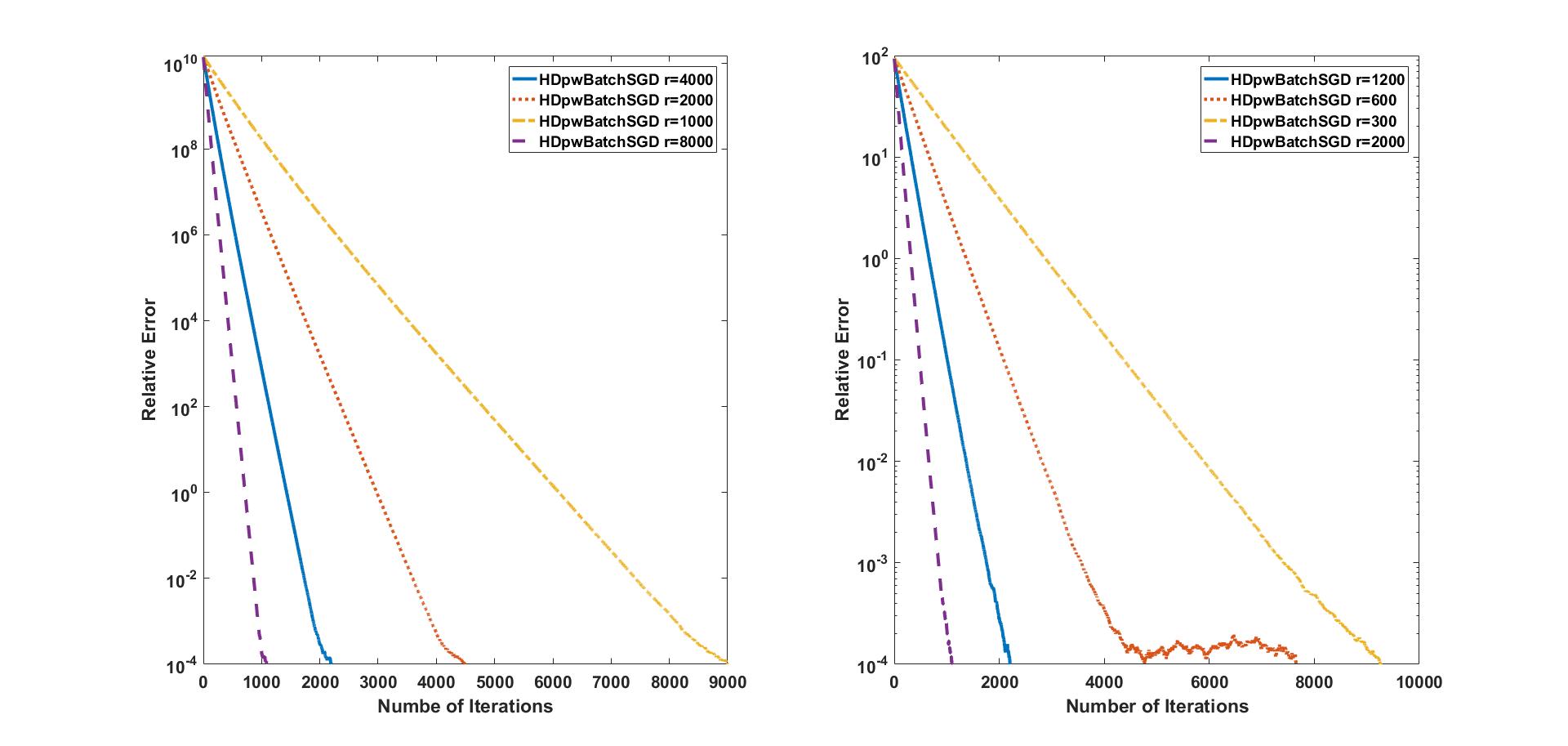}
	\caption{Iteration number of HDpwBatchSGD with different batch size $r$ on  (from left to right)  datasets Syn1 and Syn2 (unconstrained case).}
	\vspace{-0.1in}
\end{figure}
\begin{figure}[h]
	\centering
	\includegraphics[width=0.5\textwidth, height=3.5cm]{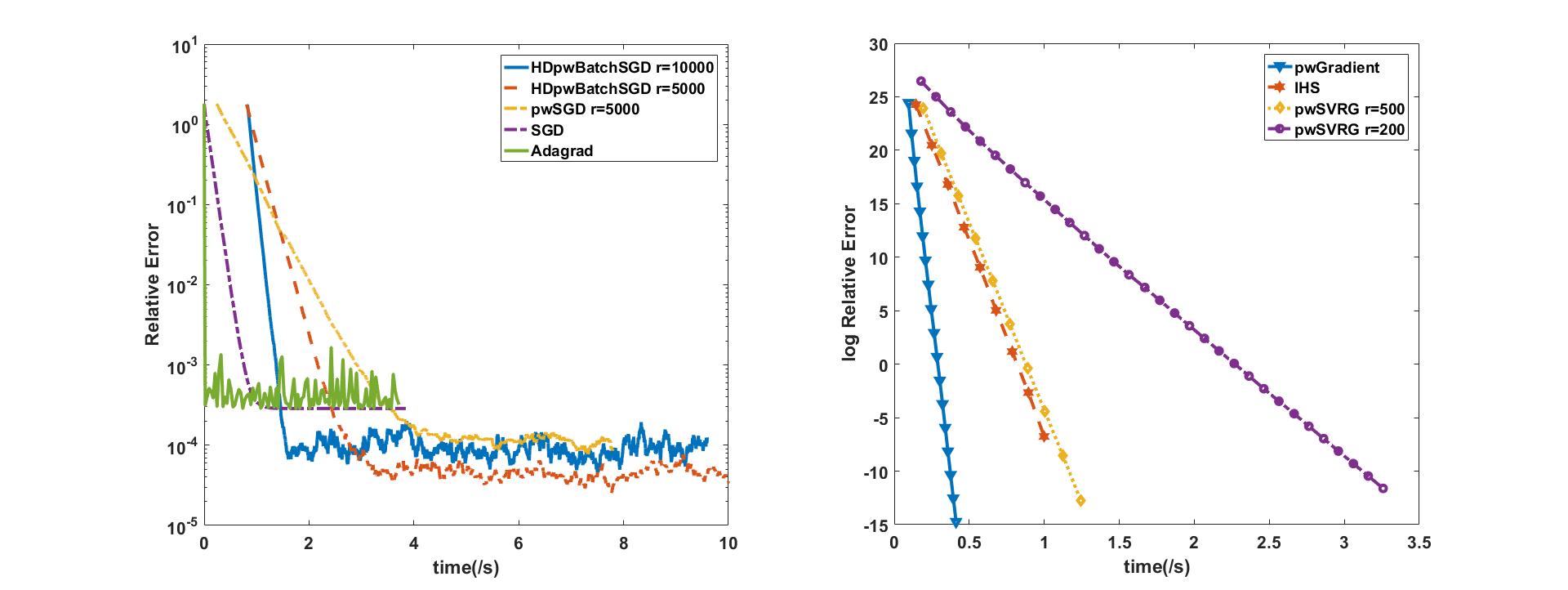}
	\caption{Experimental results on dataset Syn1  (unconstrained case); left is for the low precision solvers, right is for the high precision solvers.}
	\vspace{-0.1in}
\end{figure}
\begin{figure*}[h]
	\centering
	\includegraphics[width=1\textwidth, height=4cm]{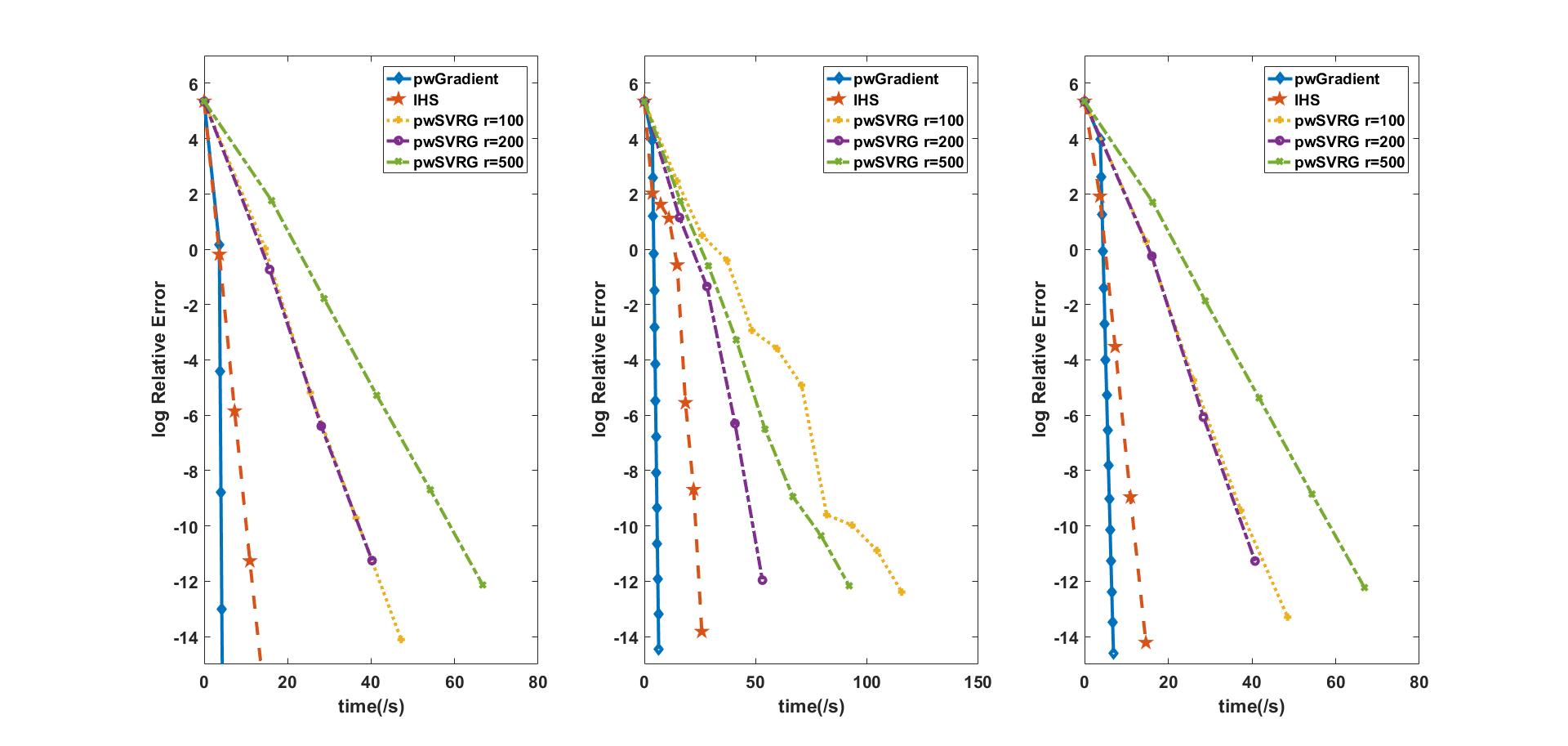}
	\caption{Experimental Results on dataset Year  for the high precision solvers; left is for the unconstrained case, middle is for the $\ell_1$ constrained case, and right is for the $\ell_2$ constrained case.}
\end{figure*}
 We generate a Gaussian vector $x^*$ as the response vector and  let $b=Ax^*+e$, where $e$ is a Gaussian noise with standard variance of $0.1$. In each experiment,  the initial value $x_0$ is set to be the zero vector. We start with some numerical experiments to gain insights to the iteration complexity and relative error shown in \textbf{Theorem \ref{thm4}}, and verify them for the unconstrained case using  synthetic datasets Syn1 and Syn2.
 Later we determine the relative errors for the low and high precision solvers, and plot the results in Figures 1 and 2. 

\subsection{Experiments on Real Datasets}
We consider the unconstrained and the $\ell_1$ and $\ell_2$ constrained linear regression problems on the Buzz dataset for both the low and high precision cases and  on the Year dataset for the high precision case. The results are plotted in Figures 3,  4,  5 and  6, respectively.
\begin{figure}[h]
	\centering
	\includegraphics[width=0.5\textwidth, height=3.5cm]{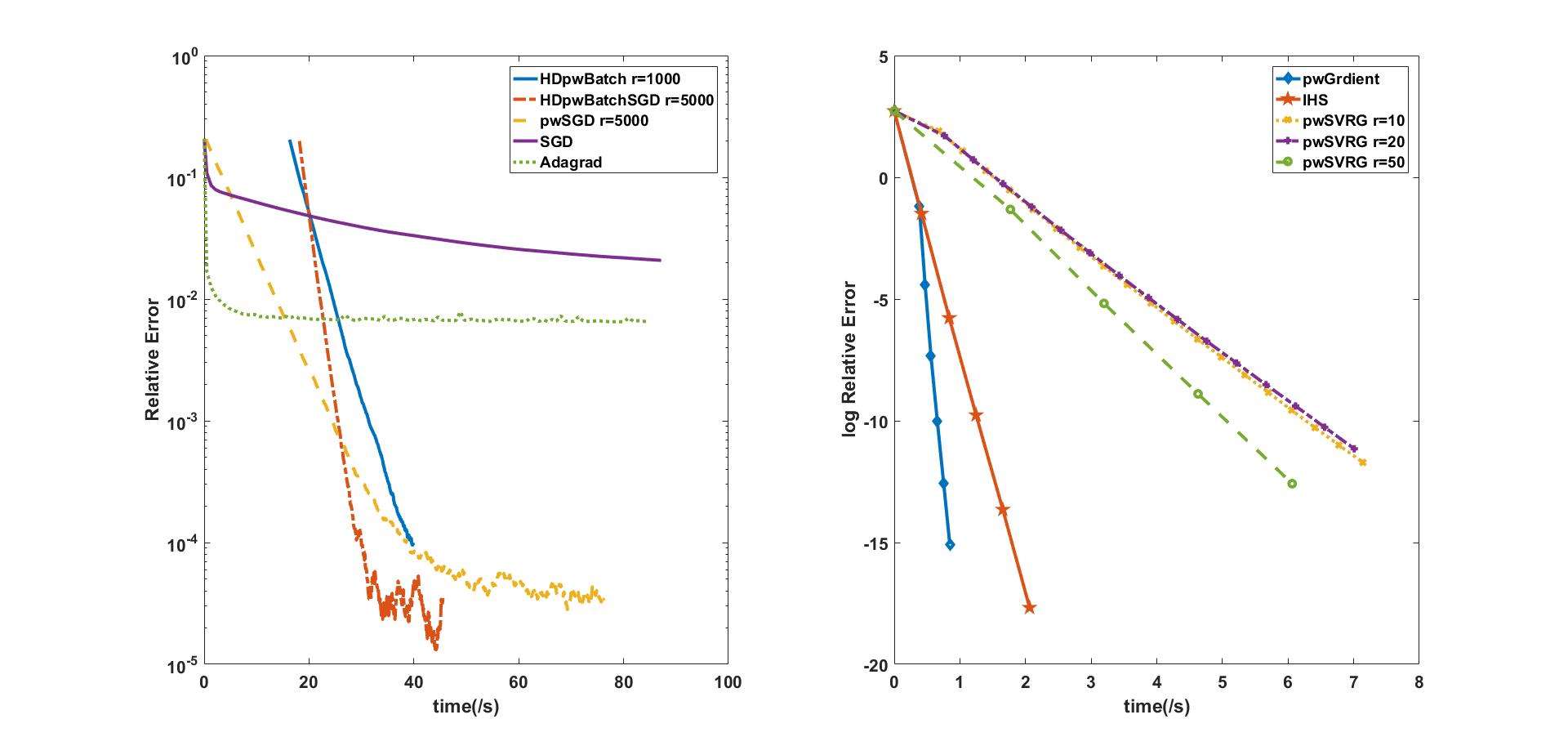}
	\caption{Experimental results on dataset Buzz (unconstrained case); left is for the low precision solvers and right is for the high precision solvers.}
	\vspace{-0.1in}
\end{figure}

 \begin{figure}[h]
	\centering
	\includegraphics[width=0.5\textwidth,height=3.5cm]{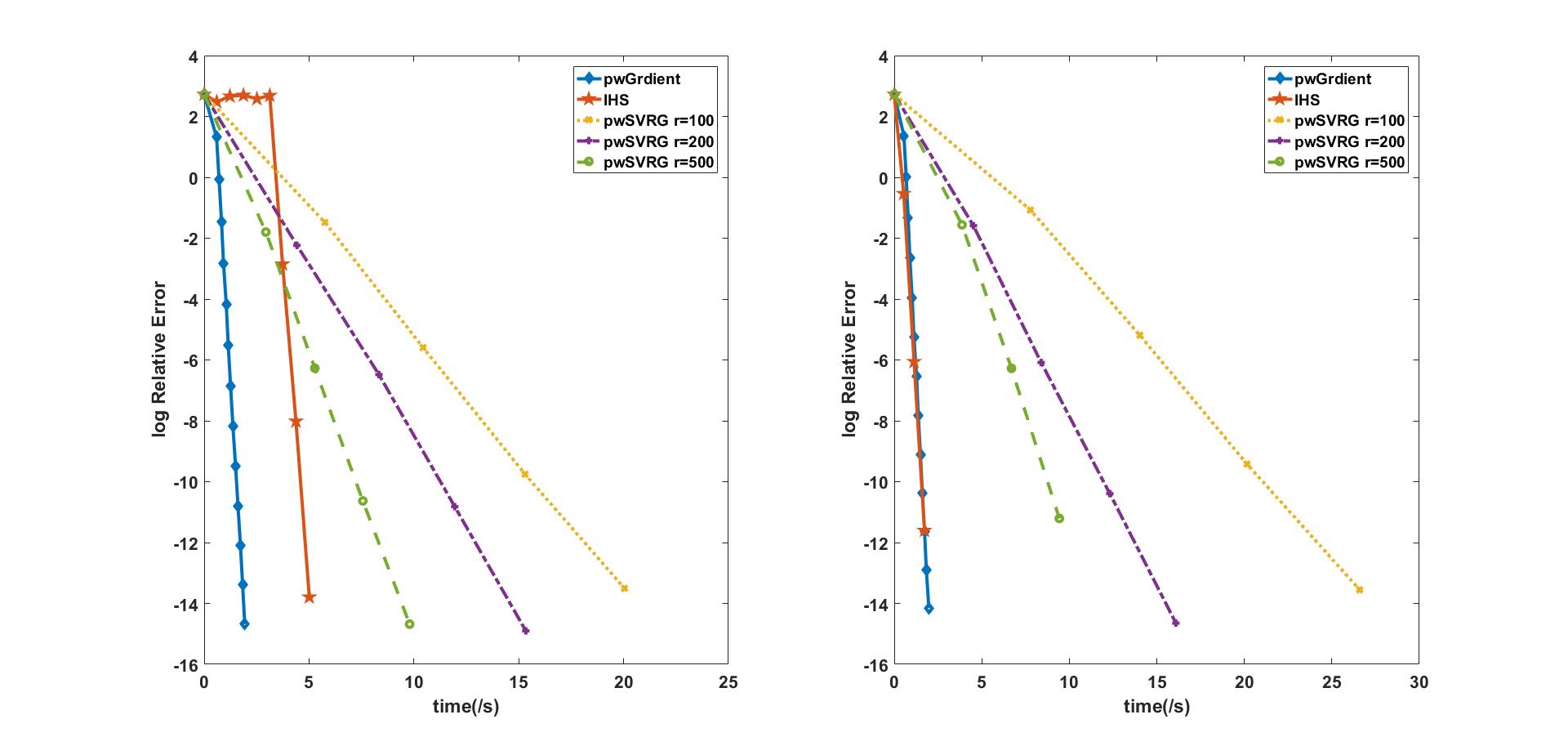}
	\caption{Experimental results on dataset Buzz for the high precision solvers (constrained case); left is for the $\ell_1$ constrained case and right is for the $\ell_2$ constrained case.}
	\vspace{-0.1in}
\end{figure}

\begin{figure}[h]
	\centering
	\includegraphics[width=0.5\textwidth, height=3.5cm]{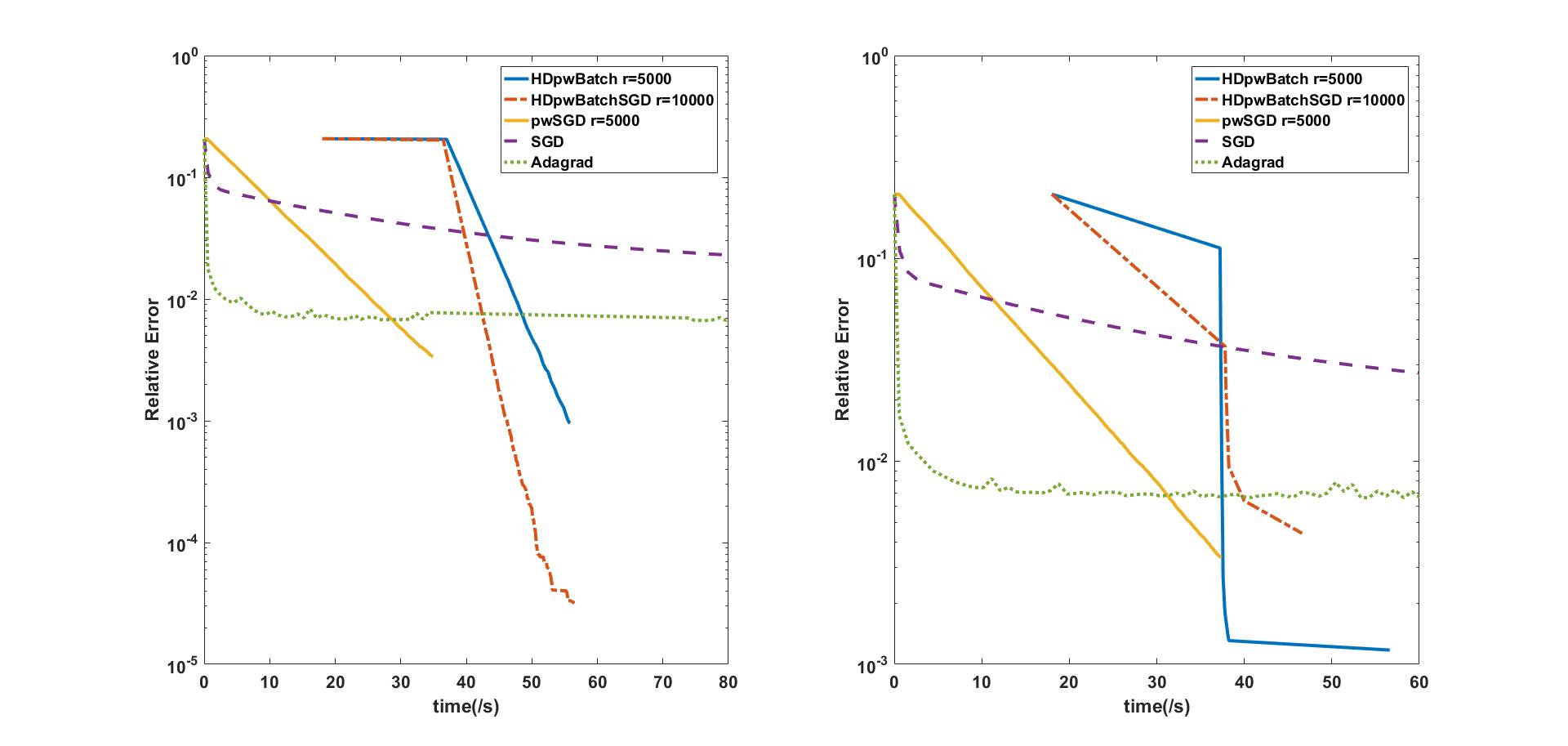}
	\caption{Experimental results on dataset Buzz for the low precision solvers (constrained case); left is for the $\ell_1$ constrained case and right is for the $\ell_2$ constrained case.}
\end{figure}

\subsection{Results}
From Figure 1, we can see that in both cases if  the batch size is increased by a factor of  $b=2$, the iteration complexity approximately decreases by a factor of $b=2$. This confirms the theoretical guarantees of our methods. From other figures, we can also see that in both the high (Figures 2,3,4, and 5) and the low precision (Figures 2,4, and 6) cases, our methods considerably outperform other existing methods. Particularly, in the low precision case, the relative error of HDpwBatchSGD decreases much faster with a large batch size  (except for the $\ell_2$ constrained case in Figure 6).  
 With a large batch size, our method runs even faster than pwSGD despite a relatively long the preconditioning time (due to the second preconditioning step). This is because in practice CountSketch is faster than SRHT, especially when the dataset is sparse.  This also confirms the claim of our methods.\par
 
 For the high precision case, experiments indicate that pwGradient can even outperform the stochastic methods. Also,
 as we mentioned earlier, pwGradient needs to sketch only once. This enables it to run much faster than IHS, and still preserves the high probability of success.

\section{Conclusion}
In this paper, we studied the large scale constrained linear regression problem, and presented new methods for both the low and high precision cases, using some recent developments in sketching and optimization. For the low precision case, our proposed methods have lower time complexity than the state-of-the-art technique. For the high precision case, our method considerably improves the  time complexity of the Iterative Hessian Sketch method. Experiments on synthetic and benchmark datasets confirm that our methods indeed run much faster than the existing ones.

\appendix

\begin{lemma}{(Lipschitz Tail Bound\cite{ledoux1997talagrand})}\label{lemma:1}
	Let $f$ be a convex function on vectors having $L$-Lipschitz property, and $\epsilon$ be a Rademacher vector. 
	Then for any $t\geq 0$, the following inequality holds 
	$$\mathbf{P}\{f(\epsilon)\geq \mathbf{E}f(\epsilon)+Lt\}\leq e^{-\frac{t^2}{8}}.$$
\end{lemma}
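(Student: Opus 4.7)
The plan is to derive the tail bound from a sub-Gaussian moment generating function estimate via the Chernoff method. Concretely, I would first establish that for every $\lambda \in \mathbb{R}$,
\[
\mathbf{E}\exp\bigl(\lambda(f(\epsilon) - \mathbf{E} f(\epsilon))\bigr) \;\leq\; \exp(L^2\lambda^2/2).
\]
Once this sub-Gaussian MGF bound is in hand, Markov's inequality yields $\mathbf{P}\{f(\epsilon) - \mathbf{E}f(\epsilon) \geq Lt\} \leq \exp(L^2\lambda^2/2 - \lambda L t)$, and optimizing over $\lambda > 0$ (taking $\lambda = t/L$) produces $\exp(-t^2/2)$, which is stronger than the claimed $\exp(-t^2/8)$ and therefore implies the statement with room to spare for looser intermediate constants.

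To establish the sub-Gaussian MGF bound, I would invoke the Herbst entropy argument in combination with the discrete logarithmic Sobolev inequality for the uniform measure on the cube $\{-1,1\}^n$. Set $\psi(\lambda) = \log \mathbf{E}\exp(\lambda(f(\epsilon) - \mathbf{E}f(\epsilon)))$, so $\psi(0) = 0$ and $\psi'(0) = 0$. Applying the log-Sobolev inequality to $g = \exp(\lambda f/2)$ gives an inequality of the form
\[
\lambda\psi'(\lambda) - \psi(\lambda) \;\leq\; \frac{\lambda^2}{2}\,\sup_{x\in\{-1,1\}^n}\sum_{i=1}^n \bigl(D_i f(x)\bigr)^2,
\]
where $D_i f(x) = \tfrac{1}{2}\bigl(f(x^{(i,+)}) - f(x^{(i,-)})\bigr)$ is the discrete partial difference obtained by flipping the $i$-th coordinate. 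The left side equals $\lambda^2 (\psi(\lambda)/\lambda)'$, so integrating from $0$ immediately converts this into the quadratic bound $\psi(\lambda) \leq L^2\lambda^2/2$ \emph{provided} the supremum on the right is controlled by $L^2$.

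The main obstacle --- and the sole place where the convexity hypothesis is essential --- is to bound $\sum_{i=1}^n (D_i f(x))^2$ independently of the dimension $n$. The naive Lipschitz estimate gives only $|D_i f(x)| \leq L$ per coordinate and hence $nL^2$ in total, which misses the correct dimension-free scaling. Convexity remedies this: for each $x \in \{-1,1\}^n$, each discrete difference $D_i f(x)$ can be expressed as the $i$-th component of a subgradient of $f$ evaluated at a suitable midpoint $\xi \in \mathbb{R}^n$ (selectable uniformly in $i$ by a standard convex-combination / mean value argument for convex functions), so that
\[
\sum_{i=1}^n \bigl(D_i f(x)\bigr)^2 \;\leq\; \|\partial f(\xi)\|_2^2 \;\leq\; L^2,
\]
the final inequality being the equivalence between $L$-Lipschitzness and the uniform $L^2$-bound on subgradients of a convex function. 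Feeding this estimate back into the Herbst step closes the argument; all remaining manipulations are routine.
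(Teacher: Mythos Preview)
The paper does not supply a proof of this lemma; it is quoted from the cited reference and used as a black box in the proof of Theorem~1. There is therefore no paper proof to compare against, and your proposal must be assessed on its own merits.

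Your overall route---entropy method plus Herbst argument, with convexity invoked to control the discrete gradient energy---is the standard one. However, the step you flag as the crux has a genuine gap. You assert that for each $x\in\{-1,1\}^n$ there is a \emph{single} point $\xi$, ``selectable uniformly in $i$'', at which every discrete difference $D_i f(x)$ equals the $i$-th component of some subgradient of $f$. The mean-value argument you appeal to does not deliver this: applied coordinate by coordinate, it produces a point $\xi_i$ lying on the edge from $x^{(i,-)}$ to $x^{(i,+)}$, and these $n$ edges run in mutually orthogonal directions, so there is no mechanism forcing a common $\xi$. Without a single $\xi$ you cannot pass from $\sum_i(D_if(x))^2$ to $\|\partial f(\xi)\|_2^2$, and the dimension-free bound is unproved as written.

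The textbook repair is to work one-sidedly. For any $g\in\partial f(x)$, convexity gives
\[
f(x)-f\bigl(x^{(i)}\bigr)\;\le\;\langle g,\,x-x^{(i)}\rangle\;=\;2x_ig_i,
\]
hence $\bigl(f(x)-f(x^{(i)})\bigr)_+^2\le 4g_i^2$ and therefore $\sum_i\bigl(f(x)-f(x^{(i)})\bigr)_+^2\le 4\|g\|_2^2\le 4L^2$, using only the single subgradient at $x$ itself. This one-sided estimate is exactly what the modified (one-sided) log-Sobolev inequality on the hypercube requires; feeding it into the Herbst differential inequality yields a sub-Gaussian bound with variance proxy $4L^2$ and hence the tail exponent $-t^2/8$ appearing in the statement. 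This also explains the constant: your forecast of $e^{-t^2/2}$ rested on the unjustified two-sided bound $\sum_i(D_if(x))^2\le L^2$, whereas the correct one-sided argument loses precisely the factor of four that produces the stated $e^{-t^2/8}$.
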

By lemma \ref{lemma:1}, we can prove our theorem 1.\par
\begin{theorem}
	Let $HD$ be a Randomized Hadamard Transform, and $U\in \mathbb{R}^{n\times d}$ be an ($\alpha$,$\beta$,2)-conditioned matrix.  Then, the following holds for any constant $c >1$ 
	\begin{equation}
	P\{\max_{i=1,2,\dots,n}\|(HDU)_i\|_2 \geq (1+\sqrt{8\log(cn)})\frac{\alpha}{\sqrt{n}}\} \leq \frac{1}{c}.
	\end{equation}
\end{theorem}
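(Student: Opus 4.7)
The plan is to reduce to a one-row tail bound via the Lipschitz concentration in Lemma \ref{lemma:1}, and then close with a union bound over the $n$ rows. Fix an index $i \in \{1,\dots,n\}$ and let $\epsilon = (D_{11},\dots,D_{nn})$ be the Rademacher vector encoding the diagonal signs of $D$. Since $(HDU)_i = \sum_{j=1}^n H_{ij}\,\epsilon_j\, U_j$, I would define the vector-valued linear map $\epsilon \mapsto G_i^T \epsilon$ where $G_i \in \mathbb{R}^{n \times d}$ is the matrix with $j$-th row $H_{ij} U_j$, and set $f_i(\epsilon) := \|G_i^T\epsilon\|_2 = \|(HDU)_i\|_2$. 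The function $f_i$ is convex as the composition of the Euclidean norm with a linear map, which is exactly the setting of Lemma \ref{lemma:1}.

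The first technical step is to pin down the Lipschitz constant of $f_i$. From $|f_i(\epsilon) - f_i(\epsilon')| \le \|G_i^T(\epsilon - \epsilon')\|_2 \le \|G_i\|_2 \,\|\epsilon-\epsilon'\|_2$, the Lipschitz constant is $\|G_i\|_2$. Writing $G_i = \mathrm{diag}(H_{i,1},\dots,H_{i,n})\,U$ and using $|H_{ij}|=1/\sqrt{n}$, the diagonal factor has spectral norm exactly $1/\sqrt{n}$, so $\|G_i\|_2 \le \tfrac{1}{\sqrt{n}}\|U\|_2 \le \tfrac{1}{\sqrt{n}}\|U\|_F \le \tfrac{\alpha}{\sqrt{n}}$ by the $(\alpha,\beta,2)$-conditioning hypothesis.

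The second step is to bound $\mathbb{E} f_i(\epsilon)$. By Jensen, $\mathbb{E}[\|G_i^T\epsilon\|_2] \le \sqrt{\mathbb{E}[\|G_i^T\epsilon\|_2^2]}$, and since the Rademacher coordinates are independent mean-zero with unit variance, $\mathbb{E}[\|G_i^T\epsilon\|_2^2] = \mathrm{tr}(G_i G_i^T) = \|G_i\|_F^2 = \tfrac{1}{n}\sum_j \|U_j\|_2^2 = \tfrac{1}{n}\|U\|_F^2 \le \tfrac{\alpha^2}{n}$. Hence $\mathbb{E} f_i(\epsilon) \le \alpha/\sqrt{n}$. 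Combining with the Lipschitz bound and applying Lemma \ref{lemma:1} with $L = \alpha/\sqrt{n}$ and $t = \sqrt{8\log(cn)}$ gives
$$\Pr\!\left\{\|(HDU)_i\|_2 \ge \tfrac{\alpha}{\sqrt{n}}\bigl(1 + \sqrt{8\log(cn)}\bigr)\right\} \le e^{-\log(cn)} = \tfrac{1}{cn}.$$
A union bound over $i=1,\dots,n$ upgrades the failure probability to $1/c$, which is the claim.

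I do not anticipate any serious obstacle: the argument is a textbook application of Ledoux--Talagrand once $f_i$ is cast as a convex function of $\epsilon$. The only subtle point is to bound the Lipschitz constant by the spectral norm $\|G_i\|_2$ rather than a naive row-wise sum; using $\sum_j \|H_{ij}U_j\|_2$ would lose a factor of $\sqrt{d}$ and spoil the stated constant. Using $\|U\|_F \le \alpha$ (instead of $\sigma_{\min}(U) \ge 1/\beta$) is what makes both the Lipschitz and expectation bounds come out to the clean value $\alpha/\sqrt{n}$ needed in the statement.
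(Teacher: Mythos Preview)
Your proposal is correct and follows essentially the same route as the paper: define the row norm as a convex function of the Rademacher vector, bound its Lipschitz constant by $\|\mathrm{diag}(e_i^TH)\|_2\|U\|_2\le \alpha/\sqrt{n}$ and its mean by $\|\mathrm{diag}(e_i^TH)U\|_F\le \alpha/\sqrt{n}$ via Jensen, then apply Lemma~\ref{lemma:1} with $t=\sqrt{8\log(cn)}$ and finish with a union bound over the $n$ rows. Your matrix $G_i$ is exactly the paper's $\mathrm{diag}(e_i^TH)U$, so the two arguments coincide up to notation.
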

\begin{proof}
	The proof follows similar arguments in \cite{tropp2011improved} for orthogonal matrices. Consider a fixed row index $j\in \{1,2,\dots,n\}$.  Let
	$f(x)=\|e_j^TH\text{diag}(x)U\|_2=\|x^T\text{diag}(e_j^TH)U\|_2$. Then $f(x)$ is convex and $\|f(x)-f(y)\|\leq \|x-y\|_2\|\text{diag}(e^TH)\|_2 \|U\|_2\leq \frac{\alpha}{\sqrt{n}}\|x-y\|_2$, since each entry of $H$ is either $\frac{-1}{\sqrt{n}}$ or $\frac{1}{\sqrt{n}}$. Thus $f(x)$ is $\frac{\alpha}{\sqrt{n}}$-Lipschitz. 
	By the fact that $f(\epsilon)$ is a Rademacher function, we have 
	\begin{align*}
	&\mathbf{E}f(\epsilon)\leq[\mathbf{E}f^2(\epsilon)]^{\frac{1}{2}}
	=(\mathbf{E}\|\epsilon^T\text{diag}(e_j^TH)U\|_2^2)^{\frac{1}{2}}\\
	&=\|\text{diag}(e_j^TH)U\|_F\leq \|\text{diag}(e_j^TH)\|_2\|U\|_F\leq \frac{\alpha}{\sqrt{n}}.
	\end{align*}
	
	Then by {\bf Lemma 1}, taking $t=\sqrt{8\log(cn)}$ and the union for all arrow indices, we have the theorem.
\end{proof}

\begin{lemma}\label{lemma:3}
	After two steps of preconditioning as in (7), (8), the following holds with high probability (approximately $0.9$) for the  stochastic optimization problem: $\min g(y)=E_{i\sim \mathcal{D'}}g_{{i}}(y)$, where $g_{{i}}(y)=n\|(HDU)_iy-(HDb)_i\|_2^2$, $g(y)=\|HDUy-HDb\|_2^2$, 
	$i\sim  \mathcal{D'}$ is uniformly sampled from $\{1,2,\dots,n\}$, and $U$ is an $(\alpha,\beta,2)$-conditioned basis of $A$.
	\begin{align}
	&\mu\geq \frac{2}{\beta^2},\\
	&\sup\|(HDU)_{i}\|^2_2\leq \alpha^2(1+\sqrt{8\log 10n})^2,\\
	&\sigma^2\leq 4\alpha^2(1+\sqrt{8\log(10n)})^2\sup_{y\in \mathcal{W}'}g(y)\\
	&=4\alpha^2(1+\sqrt{8\log(10n)})^2\sup_{x\in \mathcal{W}}\|Ax-b\|^2
	\end{align}
	where the constant $10$  comes from  Theorem 2 with $c=10$.
\end{lemma}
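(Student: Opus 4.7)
The plan is to prove the three bounds separately, using orthogonality of $HD$, the $(\alpha,\beta,2)$-conditioning hypothesis on $U$, and Theorem 1. All three bounds come from fairly direct calculations; the only thing requiring care is tracking the factor of $n$ built into the stochastic decomposition $g(y)=\mathbb{E}_{i\sim\mathcal{D}'}g_i(y)$ with $g_i(y)=n\|(HDU)_iy-(HDb)_i\|_2^2$ under uniform sampling.

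For the strong convexity bound (14), I would compute the Hessian of $g$ directly. Since $H$ and $D$ are orthogonal, $(HD)^T(HD)=I$, so
\begin{equation*}
\nabla^2 g(y)=2(HDU)^T(HDU)=2U^TU.
\end{equation*}
By the definition of $(\alpha,\beta,2)$-conditioning, $\|Ux\|_2\geq \|x\|_2/\beta$ for every $x$, which is equivalent to $x^TU^TUx\geq \|x\|_2^2/\beta^2$, i.e.\ $U^TU\succeq (1/\beta^2)I$. Hence $\nabla^2 g(y)\succeq (2/\beta^2)I$, giving $\mu\geq 2/\beta^2$.

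For the row-norm bound (15), I would simply invoke Theorem 1 with $c=10$: with probability at least $0.9$,
\begin{equation*}
\max_i \|(HDU)_i\|_2\leq (1+\sqrt{8\log(10n)})\frac{\alpha}{\sqrt{n}}.
\end{equation*}
Squaring and using $1/n\leq 1$ yields the stated (loose) bound; however, I would retain the sharper $1/n$-factor form for the proof of (16).

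For the variance bound (16)--(17), I would first use $\mathrm{Var}\leq \mathbb{E}[\|\cdot\|^2]$ to get $\sigma^2\leq \sup_y \mathbb{E}_i\|\nabla g_i(y)\|_2^2$. From $g_i(y)=n\,[(HDU)_iy-(HDb)_i]^2$ (a scalar inside the square), one has $\nabla g_i(y)=2n(HDU)_i^T[(HDU)_iy-(HDb)_i]$, so
\begin{equation*}
\|\nabla g_i(y)\|_2^2 = 4n^2\,\|(HDU)_i\|_2^2\,[(HDU)_iy-(HDb)_i]^2.
\end{equation*}
Plugging in the sharp row-norm bound $\|(HDU)_i\|_2^2\leq (1+\sqrt{8\log(10n)})^2\alpha^2/n$ and using $\mathbb{E}_i[\,\cdot\,]=\tfrac{1}{n}\sum_i[\,\cdot\,]$ together with $\sum_i[(HDU)_iy-(HDb)_i]^2=\|HDUy-HDb\|_2^2=g(y)$, the $n$ factors cancel and I obtain $\mathbb{E}_i\|\nabla g_i(y)\|_2^2\leq 4\alpha^2(1+\sqrt{8\log(10n)})^2\,g(y)$. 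Taking supremum over $y\in\mathcal{W}'$ gives (16). The equality in (17) then follows from $\|HDUy-HDb\|_2^2=\|Uy-b\|_2^2$ by orthogonality of $HD$, combined with the change of variables $y=Rx$, which sends $\mathcal{W}'$ to $\mathcal{W}$ and gives $Uy=AR^{-1}Rx=Ax$.

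The main obstacle, as far as I can see, is not conceptual but bookkeeping: the factor of $n$ inserted into $g_i$ (to match the uniform-sampling expectation) must combine with the $1/\sqrt{n}$ in Theorem 1 in exactly the right way for the final $\sigma^2$ bound to be dimension-free in $n$. Once that cancellation is carried through cleanly, the three bounds fall out of orthogonality of $HD$ and the conditioning of $U$ with essentially no extra work.
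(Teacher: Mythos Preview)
Your proposal is correct and follows essentially the same approach as the paper: the strong convexity bound via $\sigma_{\min}(HDU)=\sigma_{\min}(U)\geq 1/\beta$, the row-norm bound by applying Theorem~1 with $c=10$ (retaining the $1/n$ factor, which is indeed what the paper's proof uses even though the lemma \emph{statement} drops it), and the variance bound by pulling out $\sup_i\|(HDU)_i\|_2^2$ and letting the remaining sum collapse to $g(y)$. The only cosmetic difference is that the paper expands $\sigma^2=\mathbb{E}\|\nabla g_i\|^2-\|\nabla g\|^2$ and then discards the subtracted term, whereas you invoke $\mathrm{Var}\leq \mathbb{E}\|\cdot\|^2$ up front; the resulting bound is identical.
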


\begin{proof}
	We know $\mu=2\sigma_{min}^2(HDU)=2\sigma_{min}^2(U)\geq \frac{2}{\beta^2}$. Since $U$ is an $(\alpha,\beta,2)$-conditioned basis of $A$.
	By {\bf Theorem 2}, we know that with probability at least 0.9, the norm of each row of $HDU$ is smaller than $\frac{\alpha}{\sqrt{n}}(1+\sqrt{8\log 10n})$. Hence, we have 
	$$\sup\|(HDU)_{\tau_i}\|^2_2\leq \frac{\alpha^2}{n}(1+\sqrt{8\log cn})^2.$$ 
	For $\sigma^2=\sup_{y\in \mathcal{W'}}\mathbb{E}_{i\sim D}\|\nabla g_{{i}}(y)-\nabla g(y)\|_2^{2}$, we have the following with probability at least 0.9,
	\begin{align*}
	&\sigma^2=\sup_{y\in \mathcal{W}'}\mathbb{E}_{j\sim \mathcal{D}}\|2n(HDU)_j^T((HDU)_jy-(HDb)_j)\|_2^2\\
	&-\|2(HDU)^T((HDU)y-HDb)\|_2^2\\
	&=4n\sum_{j=1}^{n}\|((HDU)_j^T((HDU)_jy^*-(HDb)_j))\|_2^2\\
	&-4\|(HDU)^T((HDU)y-HDb)\|_2^2\\
	&\leq 4n\sup\|(HDU)_{{j}}\|^2_2 \|HDUy^*-HDUb\|^2_2\\
	&-\sigma_{min}(HDU)^2\|((HDU)y-HDb)\|_2^2\\
	&\leq 4n\frac{\alpha^2}{n}(1+\sqrt{8\log(10n)})^2\sup_{y\in \mathcal{W'}}g(y).
	\end{align*}
\end{proof}
where the last inequality comes from Theorem 2 and $\sigma_{min}(HDU)=O(1)$.
With Lemma \ref{lemma:3} and Theorem 2, we can now show our main theorem.
\begin{theorem}\label{thm4}
	Let  $A$ be a matrix in $\mathbb{R}^{n\times d}$ and $b$ be a vector in $\mathbb{R}^{d}$.  Let $f(x)$ denote $\|Ax-b\|^2_2$. Then with some fixed step size $\eta$, we have 
	\begin{equation}
	\mathbb{E}{f(x_T^{avg})-f(x^*)}\leq \frac{3\sqrt{2}D_{\mathcal{W}}\sigma}{\sqrt{rT}},
	\end{equation}
	where $\sigma^2\leq O(d\log(n)\sup_{x\in\mathcal{W}}\|Ax-b\|_2^2)$.
	After $T= \Theta(\frac{d\log{n}}{r\epsilon^2})$ iterations with some step size $\eta$, {\bf Algorithm 2} ensures the following with high probability, we have $\mathbb{E}{f(x_T)-f(x^*)}\leq \epsilon$. 
	
\end{theorem}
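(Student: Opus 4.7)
My plan is to establish the theorem by reducing Algorithm 2 to a standard mini-batch SGD analysis applied to the reformulated problem in (\ref{eq:7})--(8), with the variance constant controlled via the Randomized Hadamard Transform.

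First, I would observe that by construction $U=AR^{-1}$ is an $(O(\sqrt{d}),O(1),2)$-conditioned basis of $A$ (Algorithm 1), and that $HD$ is an orthogonal transform, so $\|HDUy-HDb\|_2=\|Uy-b\|_2=\|Ax-b\|_2$ under the substitution $x=R^{-1}y$. This justifies rewriting the problem as the finite-sum $\min_{y\in\mathcal{W}'} g(y)=\frac{1}{n}\sum_{i=1}^n g_i(y)$ with $g_i(y)=n\|(HDU)_i y-(HDb)_i\|_2^2$. Next I would verify that the update on $x_t$ in Step~6 is exactly the mini-batch SGD update on $y_t$ transported back through $R^{-1}$: expanding $\arg\min_{x\in\mathcal{W}}\tfrac{1}{2}\|R(x_{t-1}-x)\|_2^2+\eta\langle c_{\tau_t},x\rangle$ in the variable $y=Rx$ gives the standard proximal SGD update with stochastic gradient $\tfrac{1}{r}\sum_{j\in\tau_t}\nabla g_j(y_{t-1})$. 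Since $f(x)-f(x^*)=g(y)-g(y^*)$, a convergence bound on $y_T^{\text{avg}}$ transfers directly to $x_T^{\text{avg}}$.

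The heart of the argument is controlling the variance $\sigma^2=\sup_{y\in\mathcal{W}'}\mathbb{E}_{i}\|\nabla g_i(y)-\nabla g(y)\|_2^2$ of the single-sample stochastic gradient. Writing $\nabla g_i(y)=2n(HDU)_i^T((HDU)_i y-(HDb)_i)$ and using $\mathbb{E}\|X-\mathbb{E}X\|^2\le \mathbb{E}\|X\|^2$, I would bound
\[
\sigma^2 \;\le\; 4n\,\big(\max_i\|(HDU)_i\|_2^2\big)\sum_{i=1}^n\|(HDU)_i y-(HDb)_i\|_2^2.
\]
Invoking Theorem~\ref{thm1} with $c=10$ (say) and $\alpha=O(\sqrt{d})$ yields $\max_i\|(HDU)_i\|_2^2=O(\tfrac{d\log n}{n})$ with high probability, and the residual sum equals $\|HDUy-HDb\|_2^2=\|Ax-b\|_2^2\le\sup_{x\in\mathcal{W}}\|Ax-b\|_2^2$. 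Combining, $\sigma^2=O(d\log(n)\sup_{x\in\mathcal{W}}\|Ax-b\|_2^2)$ with high probability, which is the variance bound claimed in the theorem. For the mini-batch version with $|\tau_t|=r$ i.i.d.\ uniform samples, the standard inequality $\sigma_{\text{batch}}^2\le \sigma^2/r$ (noted just before Remark~1) applies.

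Finally, I would apply Theorem~\ref{thm2} to the reformulated problem with variance $\sigma^2/r$ and the prescribed step size (\ref{eq:4}), yielding
\[
\mathbb{E}\,g(y_T^{\text{avg}})-g(y^*) \;\le\; \frac{3\sqrt{2}\,D_{\mathcal{W}'}\,\sigma}{\sqrt{rT}},
\]
and transporting back through $x=R^{-1}y$ gives the stated bound (up to the identification of $D_{\mathcal{W}}$ with $D_{\mathcal{W}'}$, which is immediate because the linear change of variables does not alter the diameter in the relevant norm used in the convergence analysis—this is a minor bookkeeping point to justify carefully). Setting the right-hand side equal to $\epsilon$ gives $T=\Theta(d\log(n)/(r\epsilon^2))$. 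The step I expect to require the most care is the variance computation: obtaining the clean factor of $d\log n$ (rather than something worse) hinges on using Theorem~\ref{thm1} uniformly over rows together with the isometry property of $HD$ to avoid a naive bound that would scale with $\sup_i g_i$ rather than $\sup g$; the Lipschitz-tail argument in Lemma~\ref{lemma:1} is what makes this uniform control possible with only logarithmic loss.
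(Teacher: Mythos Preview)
Your proposal is correct and follows essentially the same route as the paper: the paper's proof also (i) establishes the equivalence $y_t=Rx_t$ between the Algorithm~2 update and mini-batch SGD on the reformulated problem, (ii) bounds $\sigma^2$ via Theorem~\ref{thm1} (with $c=10$) to obtain the $O(d\log n\,\sup_{x\in\mathcal{W}}\|Ax-b\|_2^2)$ variance (this is their Lemma~\ref{lemma:3}), and (iii) invokes Theorem~\ref{thm2} with $\sigma^2_{\text{batch}}\le\sigma^2/r$ and identifies $D_{\mathcal{W}'}=D_{\mathcal{W}}$. Your variance bound via $\mathbb{E}\|X-\mathbb{E}X\|^2\le\mathbb{E}\|X\|^2$ is exactly what the paper effectively does (they write variance as second moment minus squared mean and then drop the subtracted term).
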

\begin{proof}
	Consider  $\{y_i\}_{i=1}^{T}$ updated by Lemma \ref{lemma:3} with $y_0=Rx_0$.  We first show by mathematical induction that  $y_i=Rx_i$ and $y^*=Rx^*$ for all $i$.  
	Clearly, by the definition of $y_i$, this is true for $i=0$. Assume that it is true for $i=k$. In the $(k+1)$-th iteration,  we assume that the i-th sample in the k-th iteration is obtained by using SGD. Then, we have (denote $m=\frac{n}{r}$)
	\begin{multline*}
	y_{k+1}=\arg\min_{y\in \mathcal{W'}} \eta\langle\nabla g_{\tau_{k}}(y_k),y\rangle+\frac{1}{2}\|y-y_k\|_2^2\\
	=\arg\min_{y\in \mathcal{W'}} 2m\eta\sum_{j\in{\tau_{k}}}((HDU)_jR^{-1}x_k-(HDb)_j)\\(HDA)_j^T R^{-1}y
	+\frac{1}{2}\|y-Rx_t\|^2_2.
	\end{multline*}
	From Steps 5 and 6, we know that 
	\begin{multline*}
	x_{k+1}=\arg\min_{x\in \mathcal{W}} 2m\eta\sum_{j\in{\tau_{k}}}((HDA)_jx_k-(HDb)_j)\\(HDA)_jx+\frac{1}{2}\|Rx-Rx_t\|^2_2 ,
	\end{multline*}
	where $\mathcal{W}=R^{-1}\mathcal{W'}$.  From above, we know that $x_{k+1}=R^{-1}y_{k+1}$. This means that $y_i=Rx_i$ is true for all $i$.
	Next, by using the variance in the mini-batch SGD, and lemma 3, we know the $\sigma^2_{batch}=\frac{\sigma^2}{r}$, where $\sigma^2$ is as in Lemma \ref{lemma:3}. Then by Theorem 2 we get $\mathbb{E}{g(y_T^{\text{avg}})-y(y^*)}\leq \frac{3\sqrt{2}D_{\mathcal{W'}}\sigma}{\sqrt{rT}}$, replacing $y_T=Rx_T, y_T^{avg}=x_T^{avg}$ and $D_{\mathcal{W'}}=D_{\mathcal{W}}$(by the definition), we get the proof.
\end{proof}

\begin{theorem}
	Let $f(x)=\|Ax-b\|^2$.  Then, for some step size $\eta=O(1)$ in pwGradient, the following holds,  
	$$f(x_t)-f(x^*)\leq (1-O(1))^tO((f(x_0)-f(x^*))), $$
	with high probability.
\end{theorem}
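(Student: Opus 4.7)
The plan is to reduce pwGradient on the original problem to ordinary projected gradient descent on a well-conditioned reformulation, and then invoke the classical linear convergence rate for smooth strongly convex objectives.

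First I would change variables by setting $y = Rx$, $\mathcal{W}' = R\mathcal{W}$, and $U = AR^{-1}$, so that $f(x) = \|Uy - b\|_2^2 =: g(y)$. By Algorithm 1, $U$ is $(O(\sqrt{d}), O(1), 2)$-conditioned, so $\sigma_{\min}(U) \geq 1/O(1)$ and $\|U\|_F \leq O(\sqrt{d})$, in particular $\|U\|_2 \leq O(\sqrt{d})$. But we actually need $\|U\|_2 = O(1)$ to get a constant condition number for $g$; I would cite Table 2, which asserts $\kappa(U) = \kappa(AR^{-1}) = O(1)$ for SRHT/Gaussian/CountSketch/Sparse $\ell_2$ sketches. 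This gives the strong convexity parameter $\mu_g = 2\sigma_{\min}(U)^2 = \Omega(1)$ and the smoothness parameter $L_g = 2\|U\|_2^2 = O(1)$, so the condition number of $g$ is $O(1)$ with high probability over the sketch.

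Next I would verify that one pwGradient step in the $x$-variable is exactly one projected gradient step on $g$ in the $y$-variable. Starting from
\[
x_{t+1} = \arg\min_{x\in\mathcal{W}}\Bigl\{\tfrac{1}{2}\|R(x-x_t)\|_2^2 + \eta\langle 2A^T(Ax_t - b), x\rangle\Bigr\},
\]
substitute $x = R^{-1}y$ and use $A = UR$ to obtain
\[
y_{t+1} = \arg\min_{y\in\mathcal{W}'}\Bigl\{\tfrac{1}{2}\|y - y_t\|_2^2 + \eta\langle 2U^T(Uy_t - b), y\rangle\Bigr\} = \mathcal{P}_{\mathcal{W}'}\bigl(y_t - \eta\,\nabla g(y_t)\bigr).
\]
This is standard projected gradient descent on $g$ with fixed step size $\eta$, and by induction $y_t = Rx_t$, $y^* = Rx^*$.

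Finally I would plug into the standard contraction lemma for projected gradient descent on an $L_g$-smooth $\mu_g$-strongly convex function on a closed convex set: with $\eta = 1/L_g = \Theta(1)$ (so $\eta = 1/2$ works since $\|U\|_2 \leq 1 + O(1)$), one has
\[
g(y_{t+1}) - g(y^*) \leq \bigl(1 - \mu_g/L_g\bigr)\bigl(g(y_t) - g(y^*)\bigr) = (1 - O(1))\bigl(g(y_t) - g(y^*)\bigr).
\]
Iterating and using $f(x_t) - f(x^*) = g(y_t) - g(y^*)$ completes the proof. The $O(1)$ multiplicative constant on the right-hand side of the theorem absorbs the trivial factor from relating $f(x_0) - f(x^*)$ to $g(y_0) - g(y^*)$ (which is in fact exactly equal, so this constant can be taken to be $1$).

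The main obstacle is really just bookkeeping: confirming that $\kappa(U) = O(1)$ with high probability (as in Table 2) so that a fixed $O(1)$ step size is admissible, and carefully handling the projection set transformation $\mathcal{W}' = R\mathcal{W}$ so that $\mathcal{P}_{\mathcal{W}'}$ in the $y$-coordinates matches the $\mathcal{P}_{\mathcal{W}}$ of Algorithm 4; both are routine but worth checking explicitly. Everything else is an application of the textbook contraction bound for projected gradient descent.
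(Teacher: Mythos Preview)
Your proposal is correct and follows essentially the same approach as the paper: change variables $y=Rx$, identify pwGradient with projected gradient descent on $g(y)=\|Uy-b\|_2^2$, invoke $\kappa(U)=O(1)$ from Table~2, and apply the textbook linear convergence rate for smooth strongly convex objectives. The only cosmetic difference is that the paper routes through the iterate distance $\|y_0-y^*\|_2^2$ (via Nesterov's bound) before converting back to function values---which is the source of the extra $O(\cdot)$ constant on the right-hand side---whereas your direct function-value contraction is equivalent and, as you observe, shows that constant can be taken to be~$1$.
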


\begin{proof}
	Similar to the proof of Theorem 4, we can show that  the updating step is just performing the projected gradient descent operations on $\|AR^{-1}y-b\|_2^2$ and thus $x_{t+1}=R^{-1}y_{t+1}$. Since the condition number of $U=AR^{-1}$ is $O(1)$,  by the convergence rate of gradient descent on strongly convex functions \cite{nesterov2013introductory} and with step size $\eta=O(1)$, we know that 
	\begin{align*}
	f(x_t)-f(x^*)=&\|Uy_{t}-b\|_2^2-\|Uy^*-b\|_2^2\\
	&\leq \frac{2\sigma^2_{\max}(U)}{2}(1-O(1))^{2k}\|y_0-y^*\|_2^2.
	\end{align*} 
	By the strongly convexity property, we know that $\frac{2\sigma^2_{\min}(U)}{2}\|y_0-y^*\|_2^2\leq \|Uy_0-b\|_2^2-\|Uy^*-b\|_2^2=f(x_0)-f(x^*)$.  Also, by $\kappa{(U)}=O(1)$ (see Table 2). We get the theorem.
\end{proof}
\appendix
\section{Details on HDpwAccBatchSGD}
For the completeness of the paper, we first give the preliminaries on accelerated stochastic gradient descent and multi-epoch accelerated stochastic gradient descent method, more can refer to \cite{ghadimi2013optimal}.
Now we consider the uniform sampling version,
\begin{equation}\label{eq:5}
\min_{x\in\mathcal{W}}F(x)=\frac{1}{n}\sum_{i=1}^{n}{f_i(x)}.
\end{equation}
We assume $F(\cdot)$ is $\mu$-strongly convex and $L$-smooth. In the t-th iteration, the accelerated stochastic gradient descent is the following:
\begin{align}
&\tilde{x}_t=(1-q_t)\hat{x}_{t-1}+q_tx_{t-1},\\
&x_t=\arg\min_{x\in \mathcal{W}}\{\eta_t[\langle \nabla f_i(\tilde{x}_t),x\rangle+\frac{\mu}{2}\|\tilde{x}_t-x\|_2^2]+\frac{1}{2}\|x-x_{t-1}\|_2^2\}\\
&\hat{x}_t=(1-\alpha_t)\hat{x}_{t-1}+\alpha_tx_t
\end{align}
where the step size $\{q_t\},\{\alpha_t\},\{\eta_t\}$ are later defined and the initial points satisfy $\hat{x}_0=\tilde{x}_0=x_0$.
\begin{algorithm}[h]
	\caption{Multi-epoch Stochastic Accelerated Gradient Descent} 
	$\mathbf{Input}$: 	 $p_0\in \mathcal{W}$ is the initial point, and a bound $V_0$ such that $F(x_0)-F(x^*)\leq V_0$ is given, $S$ is the epoch number.
	\begin{algorithmic}[1]
		\For{$s=0,1,\cdots S$}{
			\State Run $N_s$ iterations of Stochastic Accelerated Gradient method with $x_0=p_{s-1}, \alpha_t=\frac{2}{t+1}, q_t=\alpha_t, $ and $\eta_t=\eta_st$, where 
			\begin{align*}
			&N_s=\max\{4\sqrt{\frac{2L}{\mu}},\frac{64\sigma^2}{3\mu V_02^{-s}}\}\\
			&\eta_s=\min\{\frac{1}{4L},\sqrt{\frac{3V_02^{-(s-1)}}{2\mu\sigma^2N_s(N_s+1)^2}}\}
			\end{align*}
			\State Set $p_s=\hat{x}_{N_s}$, where $\hat{x}_{N_s}$ is get from step 1.
			\EndFor}\\
		\Return $p_s$
	\end{algorithmic}
\end{algorithm}
\begin{algorithm}[h]
	\caption{HDpwAccBatchSGD($A$,$b$,$x_0$,$r$,$s$, $V_0$,$S$)} 
	$\mathbf{Input}$: 	 $x_0$ is the initial point, $r$ is the batch size, $\eta$ is the fixed step size, $s $ is the sketch size, $S$ is the number of epochs and bound $V_0$ satisfies $F(x_0)\leq V_0$.
	\begin{algorithmic}[1]
		\State Compute $R\in \mathbb{R}^{d\times d}$ which makes $AR^{-1}$ an $O(\sqrt{d}),O(1),2)$-conditioned basis of $A$ as in Algorithm 1 by using a sketch matrix $S$ with size $s \times n$.
		\State  Compute $HDA$ and $HDb$, where $HD$ is a Randomized Hadamard Transform.
		\State Run Multi-epoch Stochastic Accelerated Gradient Descent with $x_0$, $V_0$, $L=O(1)$,$\mu=O(1)$ and with $S$ epochs,
		where the step of Stochastic Accelerated Gradient Descent is as following:
		\State Randomly sample  an indices set $\tau_{t}$ of size $r$, where each index in $\tau_{t}$ is i.i.d uniformly sampled.
		\State 
		$c_{\tau_{t}}=2\frac{n}{r}\sum_{j \in \tau_t}{(HDA)_j^T[(HDA)_jx_{t-1}-(HDb)_j]} 
		=2\frac{n}{r} (HDA)_{\tau_t}^T[(HDA)_{\tau_{t}}x_{t-1}-(HDb)_{\tau_{t}}]  $  
		\begin{align*}
		&\tilde{x}_t=(1-q_t)\hat{x}_{t-1}+q_tx_{t-1},\\
		&x_t=\arg\min_{x\in \mathcal{W}}\{\eta_t[\langle c_{\tau_{t}} ,x\rangle+\frac{\mu}{2}\|R(\tilde{x}_t-x)\|_2^2]\\
		&+\frac{1}{2}\|R(x-x_{t-1})\|_2^2\}\\
		&\hat{x}_t=(1-\alpha_t)\hat{x}_{t-1}+\alpha_tx_t
		\end{align*}
		\Return $p_s$
	\end{algorithmic}
\end{algorithm}
\begin{theorem}
	If Assumption 1 and 2 hold, the after $O(\sqrt{\frac{L}{\mu}}\log(\frac{V_0}{\epsilon})+\frac{\sigma^2}{\mu\epsilon})$ iterations of stochastic accelerated gradient descent with $O(\log(\frac{V_0}{\epsilon}))$ epochs, the output of multi-epoch stochastic accelerated gradient descent $x_S$ satisfies $\mathbb{E}F(x_S)-F(x^*)\leq \epsilon$, here $V_0$ is a given bound such that $F(x_0)-F(x^*)\leq V_0$.
\end{theorem}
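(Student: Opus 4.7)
The plan is to follow the standard epoch-doubling strategy for accelerated stochastic gradient methods: show that each epoch halves (in expectation) the optimality gap, then sum the iteration budgets geometrically.

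First I would establish a single-epoch guarantee for the accelerated stochastic gradient scheme (\ref{eq:5}) applied to an $L$-smooth, $\mu$-strongly convex $F$. With the prescribed weights $\alpha_t=q_t=2/(t+1)$ and step size $\eta_t=\eta_s t$, a standard Lyapunov argument (as in Ghadimi--Lan) yields a bound of the shape
\begin{equation*}
\mathbb{E}[F(\hat{x}_{N_s})-F(x^*)] \;\leq\; \frac{4L\|x_0-x^*\|_2^2}{N_s(N_s+1)} \;+\; C\,\eta_s\,\sigma^2\,N_s
\end{equation*}
for a universal constant $C$, where $\sigma^2$ is the mini-batch variance. Optimizing over $\eta_s$ subject to $\eta_s\leq 1/(4L)$ reproduces exactly the choice of $\eta_s$ used in the algorithm, turning the bound into
\begin{equation*}
\mathbb{E}[F(\hat{x}_{N_s})-F(x^*)] \;\leq\; \frac{4L\|x_0-x^*\|_2^2}{N_s^2} + C'\,\frac{\sigma\sqrt{\|x_0-x^*\|_2^2}}{\sqrt{N_s}}\cdot \frac{1}{\sqrt{\mu}}\cdot(\cdots),
\end{equation*}
and then strong convexity $\tfrac{\mu}{2}\|x_0-x^*\|_2^2\leq F(x_0)-F(x^*)$ lets me replace the distance term by the functional gap.

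The second step is the epoch recursion. Let $V_s=V_0\,2^{-s}$ and suppose by induction $\mathbb{E}[F(p_{s-1})-F(x^*)]\leq V_{s-1}$. Feed this into the single-epoch bound with $x_0=p_{s-1}$; the choice of $N_s$ in the algorithm is precisely designed so that both terms on the right are at most $V_{s-1}/4$, giving $\mathbb{E}[F(p_s)-F(x^*)]\leq V_s$. Indeed, the deterministic part forces $N_s\gtrsim \sqrt{L/\mu}$, and the stochastic part forces $N_s\gtrsim \sigma^2/(\mu V_{s-1})$, matching the two arguments of the $\max$ in the definition of $N_s$. Taking $S=\lceil \log_2(V_0/\epsilon)\rceil$ epochs then guarantees $\mathbb{E}[F(p_S)-F(x^*)]\leq \epsilon$.

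Finally, I would sum the iteration counts. The deterministic term contributes $S\cdot O(\sqrt{L/\mu})=O(\sqrt{L/\mu}\log(V_0/\epsilon))$. The stochastic term contributes
\begin{equation*}
\sum_{s=1}^{S} \frac{C\sigma^2}{\mu V_{s-1}} \;=\; \frac{C\sigma^2}{\mu V_0}\sum_{s=1}^{S}2^{s-1} \;=\; O\!\left(\frac{\sigma^2}{\mu\,\epsilon}\right),
\end{equation*}
since $V_0 2^{-S}=\Theta(\epsilon)$ makes the geometric series telescope to $O(V_0/\epsilon)$. Adding the two rates gives the claimed total $O(\sqrt{L/\mu}\log(V_0/\epsilon)+\sigma^2/(\mu\epsilon))$.

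The main obstacle is the single-epoch analysis: one has to carry the weighted Lyapunov function $\Gamma_t=\tfrac{1}{\alpha_t\eta_t}$ through the update using the $\mu$-strong-convexity correction built into the $x_t$-subproblem, and bound the martingale-difference noise in expectation. Everything else (the epoch induction and the geometric sum) is bookkeeping. Because the theorem is stated as a direct quotation of the Ghadimi--Lan result, I would in fact just cite their Proposition/Theorem for the single-epoch bound and restrict the written proof to the epoch induction and the geometric-sum calculation above.
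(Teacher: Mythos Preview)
Your proposal is correct and indeed outlines the standard Ghadimi--Lan epoch-doubling argument. The paper, however, does not prove this theorem at all: it explicitly states that ``the proof was given in \cite{ghadimi2013optimal}'' and simply imports the result as a black box. So your sketch already goes beyond what the paper does, and your closing remark---that one would just cite the Ghadimi--Lan single-epoch bound and write out only the epoch induction and geometric sum---is exactly in line with (and more detailed than) the paper's treatment.
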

Now we introduce our algorithm \textbf{HDpwAccBatchSGD},

we have the following theorem,
\begin{theorem}
	Denote $F(x)=\|Ax-b\|_2^2$, fix $\epsilon<V_0$, then for HDpwAccBatchSGD, after $O(\log(\frac{V_0}{\epsilon})+\frac{d\log n}{r\epsilon})$ iterations of stochastic accelerated gradient descent with $S=O(\log(\frac{V_0}{\epsilon}))$ epochs, the output of multi-epoch stochastic accelerated gradient descent $x_S$ satisfies $\mathbb{E}F(x_S)-F(x^*)\leq \epsilon$.
\end{theorem}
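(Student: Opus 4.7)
My plan is to reduce the algorithm HDpwAccBatchSGD to running the multi-epoch stochastic accelerated gradient descent from \cite{ghadimi2013optimal} on the fully preconditioned problem $\min_{y \in \mathcal{W}'} g(y) = \|HDUy - HDb\|_2^2$ with $U = AR^{-1}$, and then invoke Theorem 5 directly with the parameters established in Lemma 3. The starting observation, exactly as in the proof of Theorem 4, is that the update written in $x$-space in Algorithm 6 is an exact change-of-variables of the accelerated stochastic gradient update in $y$-space under the relation $y = Rx$: the quadratic term $\frac{1}{2}\|R(x - x_{t-1})\|_2^2$ becomes $\frac{1}{2}\|y - y_{t-1}\|_2^2$, the strong-convexity anchor $\frac{\mu}{2}\|R(\tilde x_t - x)\|_2^2$ becomes $\frac{\mu}{2}\|\tilde y_t - y\|_2^2$, and $c_{\tau_t}$ together with the identity $R^{-T}(HDA)^{T}_{\tau_t} = (HDU)^{T}_{\tau_t}$ reproduces the stochastic gradient of $g$ evaluated at $\tilde y_t$. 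A short induction on $t$, combined with $\hat y_t = R\hat x_t$ and $\tilde y_t = R\tilde x_t$, then shows that the iterates coincide with those of multi-epoch SAGD applied to $g$.

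Once this equivalence is in hand, the problem reduces to reading off $L$, $\mu$, $\sigma^2$ for $g$ and the mini-batch stochastic oracle used in Step 5, and plugging them into Theorem 5. From Lemma 3 and the fact that $U$ is an $(O(\sqrt{d}), O(1), 2)$-conditioned basis, we have $\mu = \Omega(1)$, $L = O(1)$, so $\sqrt{L/\mu} = O(1)$, and $\sigma^2 = O(d \log n \cdot \sup_{x \in \mathcal{W}} \|Ax-b\|_2^2)$. Because Step 4 draws a batch of $r$ i.i.d.\ uniform indices and averages the per-sample gradients, the variance of the mini-batch gradient estimator is $\sigma_{\text{batch}}^2 = \sigma^2 / r$. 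Substituting into Theorem 5 gives an iteration count of
\[
O\!\left(\sqrt{\tfrac{L}{\mu}} \log\tfrac{V_0}{\epsilon} + \tfrac{\sigma^2}{r \mu \epsilon}\right) \;=\; O\!\left(\log\tfrac{V_0}{\epsilon} + \tfrac{d \log n}{r \epsilon}\right)
\]
with $S = O(\log(V_0/\epsilon))$ epochs, and the guarantee $\mathbb{E} F(x_S) - F(x^*) \le \epsilon$ transfers from $g$ to $F$ because $F(x) = g(Rx)$ and $y_t = R x_t$ everywhere along the trajectory.

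The main obstacle, and the only step that is not essentially bookkeeping, is the equivalence between the $x$-space update in the algorithm and the $y$-space accelerated update. One must be careful that the change of variables respects not only the quadratic proximal term but also the auxiliary sequences $\tilde x_t$ and $\hat x_t$, since multi-epoch SAGD uses three interleaved iterates rather than one. Checking that the convex constraint sets transform compatibly (i.e.\ $\mathcal{W}' = R\mathcal{W}$ so that projections commute with the change of variables) and that the step-size schedules $\{q_t, \alpha_t, \eta_t\}$ in Algorithm 6 are exactly those required by Theorem 5 for the preconditioned $(L, \mu) = (O(1), O(1))$ parameters are the genuinely load-bearing verifications. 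A secondary subtlety is that Lemma 3 holds with constant probability (the high-probability event from Theorem 1), so the final statement is conditioned on that event; this is the sense in which ``with high probability'' should be read, and can be amplified by repeating the Hadamard step with an appropriate constant $c$ in Theorem 1. Once these two points are settled, the conclusion follows directly from Theorem 5.
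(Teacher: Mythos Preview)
Your proposal is correct and follows exactly the approach the paper intends: the paper's own proof is a one-line pointer (``The proof is similar with Lemma \ref{lemma:3} and Theorem 4''), and you have simply filled in the details of that pointer by showing the $x$-space updates of Algorithm 6 are the $y$-space multi-epoch SAGD updates under $y=Rx$, reading off $L,\mu=O(1)$ and $\sigma^2=O(d\log n)$ from Lemma \ref{lemma:3}, dividing the variance by $r$ for the mini-batch, and invoking Theorem 5. Your caveats about the three interleaved sequences, the constraint-set correspondence $\mathcal{W}'=R\mathcal{W}$, and the high-probability conditioning are all appropriate and consistent with how the paper handles the analogous points in Theorem 4.
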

\begin{proof}
	The proof is similar with Lemma \ref{lemma:3} and Theorem 4.
	
\end{proof}

\bibliography{AAAI}
\bibliographystyle{aaai}
\end{document}